\newtheorem{lemma}{Lemma}
\newtheorem{theorem}{Theorem}
\begin{document}
\title{A Fast Projected Fixed-Point Algorithm for Large Graph Matching}
\author{Yao Lu,
        Kaizhu Huang,
        and~Cheng-Lin Liu,~\IEEEmembership{Senior Member,~IEEE}
\IEEEcompsocitemizethanks{\IEEEcompsocthanksitem Yao Lu, Kaizhu Huang and Cheng-Lin Liu are with
National Laboratory of Pattern Recognition,
Institute of Automation, Chinese Academy of Sciences\protect\\
E-mail: yaolubrain@gmail.com, kzhuang@nlpr.ia.ac.cn, liucl@nlpr.ia.ac.cn
}%
\thanks{}
}

\markboth{}%
{}

\IEEEcompsoctitleabstractindextext{%
\begin{abstract}
We propose a fast approximate algorithm for large graph
matching. A new projected fixed-point
method is defined and a new doubly stochastic projection is adopted to derive the
algorithm. Previous graph matching algorithms suffer from high
computational complexity and therefore do not have good scalability
with respect to graph size. For matching two weighted graphs of $n$
nodes, our algorithm has time complexity only $O(n^3)$ per iteration and space
complexity $O(n^2)$. In addition to its scalability, our algorithm is
easy to implement, robust, and able to match undirected weighted attributed graphs of
different sizes. While the convergence rate of previous iterative graph matching algorithms is unknown, our algorithm is theoretically guaranteed to converge at a linear rate. Extensive experiments on large synthetic and real
graphs (more than 1,000 nodes) were conducted to evaluate the
performance of various algorithms. Results show that in most cases our proposed
algorithm achieves better performance than previous state-of-the-art algorithms in terms
of both speed and accuracy in large graph matching. In particular, with high
accuracy, our algorithm takes only a few seconds (in a PC) to match
two graphs of 1,000 nodes.
\end{abstract}
\begin{keywords}
Graph matching, projected fixed-point, large graph algorithm, feature correspondence, point matching
\end{keywords}}

\maketitle
\IEEEdisplaynotcompsoctitleabstractindextext
\IEEEpeerreviewmaketitle

\section{Introduction}
\IEEEPARstart{G}raph matching, aiming to find the optimal correspondences between
the nodes of two graphs, is an important and active topic in
computer vision and pattern recognition~\cite{GM30,RecentAdvance}. It has been extensively
applied in various fields including optical character recognition~\cite{OCR1,OCR2}, object recognition~\cite{object,shape1}, shape matching~\cite{shape1,shape2,shape3}, face recognition~\cite{face}, feature correspondence~\cite{feature}, point matching~\cite{point}, image retrieval~\cite{retrieval}, and video indexing~\cite{video}.

Graph matching is in general an NP-hard discrete optimization
problem. Exact graph isomorphism algorithms include Ullman's method~\cite{Ullman}, Nauty~\cite{Nauty} and VF2~\cite{VF}, all of which have exponential time complexity in worst cases. To match
two graphs within a reasonable time, one has to look for approximate
solutions. Moreover, due to noise and variability in real world graphs, the usage of exact graph matching algorithms is very limited. The focus of this paper is the design of a approximate algorithm for efficiently matching general large graphs (e.g. graphs of 1,000 nodes) in computer vision and patter recognition.

One approach of approximate graph matching algorithms is based tree search~\cite{tree1,tree2}. Its basic idea is tree search with backtracking while using heurstics to prune unfrutful paths. Another approach of graph matching algorithms is based on continuous relaxation of the discrete problem while using cotinuous optimization techniques or heuristics to optimize a matching score. Classic work include Relaxation Labeling~\cite{RL1,RL2,RL3,RL4} and Graduated Assignment~\cite{GA}. Generally, continuous relaxation based algorithms have lower computational costs than heurstic search based ones~\cite{GM30}.

In this paper, we propose a novel fast graph
matching algorithm called Fast Projected Fixed-Point (FastPFP), which is
capable of dealing with large graphs of over 1,000 nodes. By using a
new projected fixed-point method and a new partial doubly stochastic
projection, our algorithm has time complexity $O(n^3)$ per iteration
and space complexity $O(n^2)$.  In addition to its
scalability, our algorithm is easy to implement, robust, and able to match
undirected weighted attributed graphs of different sizes. We also proved the linear convergence of the new projected fixed-point algorithm, based on the theory of Convex Projection \cite{Projection1}\cite{Projection2}. To the best of our knowledge, our algorithm is the only iterative graph matching algorithm with linear convergence guarantee. We conduct
extensive experiments on benchmark datasets of large graphs. FastPFP demonstrated
better performance compared to previous state-of-the-art algorithms in most cases (only except for the graph isomorphism case, in which Umeyama's method is the fastest algorithm). In particular, in a PC, FastPFP is able to
match two graphs of 1,000 nodes within a few seconds.

The rest of this paper is organized as follows: In Section 2, we review previous work on continuous relaxation based graph matching algorithms. A simplied analysis of performance of different algorithms is given. In Section 3, we present our formulation of the graph matching problem. In Section 4, we introduce our FastPFP algorithm including the derivation, the new projection method, and convergence analysis. In Section 5, we show extensive experiments conducted on various benchmark datasets in comparison with previous state-of-the-art fast algorithms in large graph matching. In Section 6, we discuss FastPFP's comparison to FastGA, parameter sensitivity and limitations of our algorithm. Finally, we give concluding remarks in Section 7.

\section{Previous Work}
\begin{table*}
\caption{Characteristics of graph matching algorithms based on continuous relaxation}
\begin{center}
\begin{tabular}{c|c|c|c|c|c|c}
\hline
\multirow{2}{*}{Algorithm} & Time & Space & Able to Match Graphs  & Calling Hungarian method  & Convergence & Convergence \\
 &  Complexity & Complexity & of Different Sizes? & for each iteration? & Guarantee  & Rate \\
\hline
LP & $O(n^6)$ & $O(n^4)$ & No & No & N/A & N/A \\
RL & $O(n^4)$/iteration & $O(n^2)$ & Yes & No & Yes & Unknown\\
GA, POCS & $O(n^4)$/iteration & $O(n^4)$ & Yes & No & Yes & Unknown\\
RRWM & $O(n^4)$/iteration & $O(n^4)$ & Yes & No & No & N/A \\
IPFP & $O(n^4)$/iteration & $O(n^4)$ & Yes & Yes & Yes & Unknown\\
SM, SMAC & $O(n^4)$ & $O(n^4)$ & Yes & No & N/A & N/A \\
PGM & $O(n^4)$ & $O(n^2)$ & Yes & No & N/A & N/A \\
PATH & $O(n^3)$/iteration & $O(n^2)$ & Yes & Yes & Yes & Unknown\\
Umeyama & $O(n^3)$ & $O(n^2)$ & No & No & N/A & N/A \\
FastGA, PG & $O(n^3)$/iteration & $O(n^2)$ & Yes & No & Yes & Unknown\\
FastPFP & $O(n^3)$/iteration & $O(n^2)$ & Yes & No & Yes & Linear \\
\hline
\end{tabular}
\end{center}
\end{table*}
In this section, we review and analyze continuous relaxation based graph matching algorithms.
The main idea of continuous relaxation approach is to relax the original discrete optimization problem into a continuous one and solve it by continuous optimization techniques. For a simplified analysis, let us consider the performance of different algorithms based on continuous relaxation in matching two weighted graphs of $n$ nodes. Relaxation Labeling (RL)~\cite{RL1,RL2,RL3,RL4} is an iterative updating algorithm with the
assumption that the likelihood of two nodes to be
corresponded can be inferred from their neighboring nodes. The algorithm has time complexity $O(n^4)$ per iteration. Graduated Assignment (GA)~\cite{GA} is another iterative algorithm based on continuous relaxation. It enjoys a clear optimization objective function and is widely regarded as a state-of-the-art algorithm in the field. Several
techniques, including continuous relaxation, linear approximation,
softmax and Sinkhorn normalization, are used to optimize this
objective function. Compared to
RL, GA also has time complexity $O(n^4)$ per iteration but is more accurate and robust~\cite{GA}. Although the Linear Programming (LP) \cite{LP} approach is also based on linear approximation as GA, it has time complexity $O(n^6)$.

The recently proposed new graph matching algorithms include
Projections Onto Convex Sets (POCS)~\cite{POCS}, Spectral Matching
(SM)~\cite{SM}, Spectral Matching with Affine Constraint
(SMAC)~\cite{SMAC}, Integer Projected Fixed Point
(IPFP)~\cite{IPFP},
and Reweighted Random Walks Matching (RRWM)~\cite{RRWM}. They all have time complexity
$O(n^4)$  (per iteration), since they all require to construct and compute on a
compatibility matrix or an association graph, which is an $n^2\times
n^2$ matrix. Probabilistic Graph Matching (PGM)~\cite{PGM} requires the construction of
a marginalization matrix, which takes $O(n^4)$ operations as well.
Note that, although it is claimed that the
compatibility matrix can be very sparse so that sparse matrix
techniques can be used for efficient storage and computation, this
is not the case for either weighted graphs or densely connected
unweighted graphs. Path Following (PATH) algorithm \cite{PATH} has time complexity $O(n^3)$ per iteration. However, it calls the Hungarian method~\cite{Hungarian} many times for each iteration. Although the Hungarian method has time complexity $O(n^3)$, the constant factor inside the big-O notation is large. In a PC, it takes more than 500 seconds to apply Hungarian method to a $1000\times 1000$ non-sparse random matrix (See Fig. \ref{hungarian}).
Therefore, these methods do not have good
scalability with respect to graph size. Although some of the above algorithms have high matching accuracy, their high computational costs prohibit them from matching large graphs (e.g. graphs of 1,000 nodes) within a reasonable time.

\begin{figure}
\centering
\includegraphics[height=5cm,width=6cm]{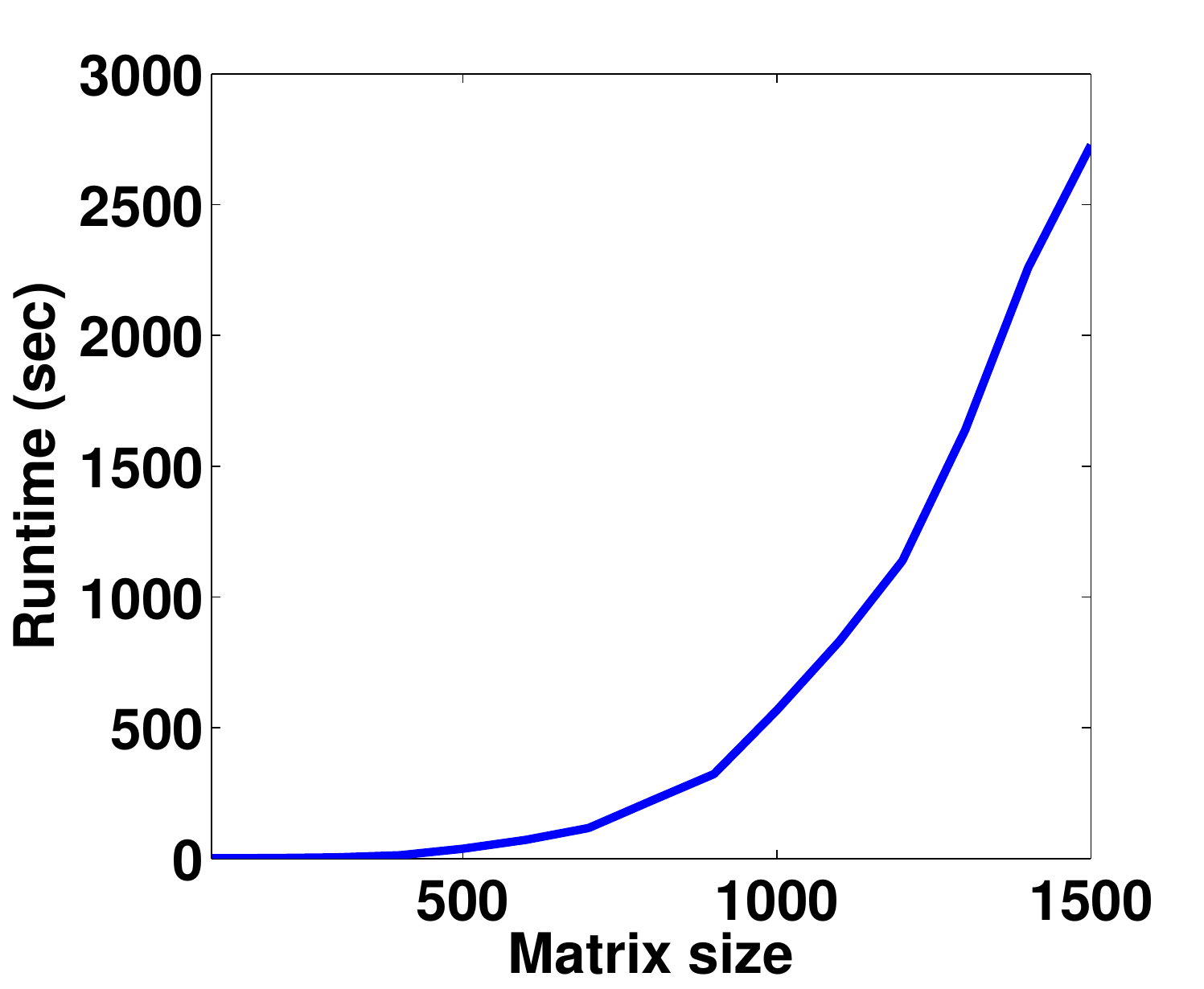}\label{hungarian}
\caption{Runtime of Hungarian method applied to random non-sparse matrices. Matrix size ranges from $100\times 100$ to $1500\times 1500$.}
\end{figure}

Distinct from the above algorithms, there are three fast graph matching algorithms, all having time complexity $O(n^3)$ (per iteration): Umeyama's methods~\cite{Umeyama}, FastGA~\cite{GA} and Projected Gradient~\cite{PG}. Umeyama's method uses eigendecomposition and Hungarian methods only once, both of which have time complexity $O(n^3)$. In \cite{PATH}, it was shown that Umeyama's method is more than an order of magnitude faster than PATH. Despite of its efficiency, Umeyama's method is unrobust, especially when the two graphs to be matched are far away from isomorphic (see Section \ref{sec:exp}). Moreover, it can only match graphs of equal sizes, which largely limits its use in practice. GA in general has time complexity $O(n^4)$ per iteration. However, by using a simple trick (see
Appendix), GA can be directly reduced to have time complexity $O(n^3)$ per iteration. We call the reduced GA as FastGA. Although IPFP can also be reduced to $O(n^3)$ per iteration using a similar trick, IPFP calls the Hungarian method for each iteration and it takes $5\sim10$ iterations for IPFP to converge \cite{IPFP}. Therefore, the reduced IPFP is still not suitable for large graph matching. Projected Gradient (PG) is a classic method to solve constrained optimization problems including graph matching. However, PG can get stuck in poor local optima easily in solving the graph matching problem, as we observed empirically (Section \ref{sec:exp}). The characteristics of different graph matching algorithms based on continuous relaxation are summarized in Table 1.

The proposed new projected fixed-point algorithm for graph matching is substantially different from the previous projected fixed-point method IPFP in three ways: (1) The projection of IPFP is onto a discrete domain, the space of partial permutation matrices, while the projection of FastPFP is onto a continuous domain, the space of partial doubly stochastic matrices, by using a new projection method \cite{DSN} (Section \ref{PDSN}). (2) IPFP requires the expensive Hungarian method for each iteration, making it unsuitable for large graph matching. On the other hand, each iteration of FastPFP only requires simple arithmetic operations (matrix addition and multiplication). (3) IPFP is proved to converge at finite steps \cite{IPFP} but the convergence rate is unknown. On the other hand, FastPFP is proved to converge at a linear rate.

\section{Problem formulation}\label{sec:Problem_formulation}
For two undirected graphs of size $n$ and $n'$, denote their adjacency
matrices (symmetric) by $A$ and $A'$ (binary valued for unweighted
graphs and real valued for weighted graphs) and attribute matrices
by $B$ and $B'$, respectively. Each row of $B$ and $B'$ is a
$k$-dimensional vector representing the attributes of a node. The
size of $A$, $A'$, $B$ and $B'$ are $n\times n$, $n'\times n'$,
$n\times k$ and $n'\times k$, respectively. The objective function of graph matching is
\begin{equation}\label{formulation1}
\min_X \frac{1}{2}\| A-XA'X^T \|^2_F + \lambda\| B -XB' \|^2_F,
\end{equation}
\begin{equation}\label{formulation2}
s.t. \quad X \mathbf{1}\leq\mathbf{1}, X^T\mathbf{1}=\mathbf{1}, X
\in \{0,1\}^{n\times n'},
\end{equation}
where $\|.\|_F$ is Frobenius matrix norm, $\lambda$ is a control
parameter, and $\mathbf{1}$ is a vector with all its elements equal
to one. In (\ref{formulation1}), the left term can be interpreted as dissimilarity between edges and the right term as dissimilarity between nodes. Constraints in (2) enforces that $X$ is a partial
permutation matrix. The constant $\frac{1}{2}$ is for convenience,
to be seen later. We assume without losing generality that $n\geq
n'$ in the paper. The minimization problem (\ref{formulation1})(\ref{formulation2}) is
equivalent to
\begin{equation}\label{QAP1}
\max_{X} \  \frac{1}{2}tr(X^{T}AXA') + \lambda tr(X^TK),
\end{equation}
\begin{equation}\label{QAP2}
s.t. \quad X\mathbf{1}\leq\mathbf{1}, X^T\mathbf{1}=\mathbf{1}, X
\in \{0,1\}^{n\times n'},
\end{equation}
where $tr$ denotes the matrix trace and $K$ denotes the ${n\times n'}$ matrix $BB'^T$ (See Appendix for derivation). This problem is
a Quadratic Assignment Problem (QAP) \cite{QAP} and is in general
NP-hard.

\section{Algorithm}\label{sec:FastPFP}
Our proposed algorithm, FastPFP relies on continuous relaxation of the original discrete
problem. The domain of the original discrete
problem, the space of partial permutation matrices
\[
\{X\  |\  X\textbf{1}\leq\textbf{1}, X^{T}\textbf{1}=\textbf{1}, X
\in \{0,1\}^{n\times n'} \},
\]
is relaxed to onto the space of partial doubly stochastic matrices
\[
\{X\  |\  X\textbf{1}\leq\textbf{1}, X^{T}\textbf{1}=\textbf{1}, X
\geq 0\}.
\]
A projected fixed-point method is then defined on the relaxed
space.
In the following subsections, we will give the derivation of our algorithm.

\subsection{Fast Projected Fixed-Point}
By relaxing the domain of the original QAP problem
(\ref{QAP1})(\ref{QAP2}) onto the space of partial doubly stochastic
matrices, we have
\begin{equation}
\max_{X} \  \frac{1}{2} tr(X^{T}AXA') + \lambda tr(X^TK),
\end{equation}
\begin{equation}
s.t. \quad X\textbf{1}\leq\textbf{1}, X^{T}\textbf{1}=\textbf{1}, X
\geq 0,
\end{equation}
where $X \geq 0$ denotes that all elements of $X$ are nonnegative.
To derive the algorithm, note that
for
\begin{equation}
f(X) = \frac{1}{2}tr(X^{T}AXA') + \lambda tr(X^TK),
\end{equation}
where $X \in R^{n\times n'}$, the gradient of $f(X)$ is
\begin{equation}
\nabla f(X) = AXA' + \lambda K.
\end{equation}
In this paper, we introduce a new projected fixed-point algorithm defined as
\begin{equation}\label{fastPFP}
X^{(t+1)} = (1-\alpha)X^{(t)} + \alpha P_d(\nabla f(X^{(t)})),
\end{equation}
\begin{equation}
s.t. \quad  \alpha\in [0,1],
\end{equation}
where $P_d(\cdot)$ is a partial doubly stochastic projection, defined as
\begin{equation}
P_d(X)=\arg\min_d \| X - d \|_{F},
\end{equation}
\begin{equation}
s.t. \quad d\textbf{1}\leq\textbf{1}, d^{T}\textbf{1}=\textbf{1},
d\geq 0,
\end{equation}
and $\alpha$ is the step size parameter. For $\alpha = 1$, the algorithm is a straightforward projected fixed-point method. For $0 < \alpha < 1$, the algorithm is a proportionally updated projected fixed-point method. The proportionality introduces more smoothness of the updating process, which could help stabilizing the algorithm (Section \ref{sec:para}). We call this new algorithm Fast Projected Fixed-Point (FastPFP) due to its linear convergence guarantee (Section \ref{sec:convergence}).
It also enjoys a property: if the initial state $X^{(0)}$ is a partial
doubly stochastic matrix, then each $X^{(t)}$ stays in the space of
partial doubly stochastic matrices.
This is because for two partial doubly stochastic matrices $X$ and $Y$, their
convex combination
\begin{equation}
Z = (1-\alpha) X + \alpha Y, \quad s.t. \quad  \alpha\in [0,1],
\end{equation}
is another partial doubly stochastic matrix.

\subsection{Interpretation and related work}
In (\ref{fastPFP}), let $\alpha = 1$ and replace $P_d(\cdot)$ with the projection onto the space of partial permutation matrices, defined as
\begin{equation}
P_{perm}(X) = \arg\min_{P} \| X - P \|_F^2,
\end{equation}
\begin{equation}
s.t. \quad P\textbf{1} \leq\textbf{1}, P^{T}\textbf{1}=\textbf{1},  P \in \{0,1\}^{n\times n'},
\end{equation}
which can be solved by the Hungarian method. Then (\ref{fastPFP}) becomes essentially a simplified IPFP
\begin{equation}\label{IPFP}
X^{(t+1)} =  P_{perm}(\nabla f(X^{(t)})).
\end{equation}
The objective function (\ref{QAP1}) is increased for each step of IPFP, as proved in \cite{IPFP}. Therefore, FastPFP can be interpreted as a softened version of IPFP.

On the other hand, the Projected Gradient (PG) algorithm is defined as
\begin{equation}\label{pg}
X^{(t+1)} = P_d( X^{(t)} + \alpha \nabla f(X^{(t)})).
\end{equation}
Although FastPFP and PG are algorithmically similar, in the graph matching problem, PG can get stuck in poor local optima easily. Moreover, PG usually runs significant slower than FastPFP. These experimental findings will be shown in detail in Section \ref{sec:exp}. And despite of the convergence guarantee of PG \cite{PG}, the convergence rate of the PG algorithm for solving the graph matching problem is unknown.

In the next subsection, we will
discuss how to obtain the solution of the projection $P_d(\cdot)$.

\subsection{Partial Doubly Stochastic Projection}\label{PDSN}
The partial doubly stochastic projection can be converted to doubly
stochastic projection by introducing slack variables. Let $Y$ be the
slacked matrix of $X$
\begin{equation}
Y=\begin{pmatrix}
X_{11} & X_{12} & ... & X_{1n'} & Y_{1(n'+1)} &...&Y_{1n}\\
X_{21} & X_{22} & ... & X_{2n'} & Y_{2(n'+1)} &...&Y_{2n}\\
... & ... & ... & ...& ... & \\
X_{n1} & X_{n2} & ... & X_{nn'} & Y_{n(n'+1)} &...&Y_{nn}\\
\end{pmatrix}.
\end{equation}
We write the above definition of $Y$ compactly as $Y_{1:n,1:n'}=X$,
where $Y_{1:n,1:n'}$ denotes the matrix formed by the first $n$ rows
and the first $n'$ columns of $Y$. To project a real nonnegative
matrix onto the space of doubly stochastic matrices, the Sinkhorn
method \cite{Sinkhorn} is usually used \cite{GA,PGM,RRWM}, which
normalizes each row and column of a matrix alternatively. The
objective of the normalization is to find a nearest doubly
stochastic matrix $D$ to a matrix $Y$ under the relative entropy
measure. Zass and Shashua proposed another doubly stochastic
projection \cite{DSN}, which  is originally to find a symmetric
nearest doubly stochastic matrix $D$ to $Y$ under the Frobenius
norm. However their method is also applicable to asymmetric doubly
stochastic matrices, if $Y$ is asymmetric. The projection of a
matrix $Y$ onto the space of doubly stochastic matrices under the
Frobenius norm is the solution to the following problem
\begin{equation}\label{DSP}
P_D(Y)=\arg\min_D \| Y - D \|_{F},
\end{equation}
\begin{equation}\label{DSPst}
s.t. \quad D\textbf{1}=\textbf{1}, D^{T}\textbf{1}=\textbf{1}, D
\geq 0.
\end{equation}
With this doubly stochastic projection, we are able to prove the linear convergence of FastPFP in the next subsection.

The doubly stochastic projection problem can be solved by successive projection. Define two
sub-problems (projections) of (\ref{DSP}-\ref{DSPst}) as
\begin{equation}\label{PP1}
P_1(Y)=\arg\min_D \| Y - D \|_{F}, \quad s.t. \quad
D\textbf{1}=\textbf{1}, D^{T}\textbf{1}=\textbf{1},
\end{equation}
\begin{equation}\label{PP2}
P_2(Y)=\arg\min_D \| Y - D \|_{F}, \quad s.t. \quad D \geq 0.
\end{equation}
Both $P_1$ and $P_2$ have a closed-form solution
\begin{equation}\label{P1}
P_1(Y)=Y+(\frac{1}{n}I+\frac{\mathbf{1}^{T}X\mathbf{1}}{n^2}I -
\frac{1}{n}Y ) \mathbf{11}^{T} - \frac{1}{n}\mathbf{11}^{T}Y,
\end{equation}
\begin{equation}\label{P2}
P_2(Y)=\frac{Y+|Y|}{2}.
\end{equation}
The derivation is left in Appendix.
The successive projection works as follows:
$P_D(Y)=...P_2P_1P_2P_1P_2P_1(Y)$. The von Neumann successive
projection lemma \cite{VonNeumann} guarantees the successive
projection converges to $P_D(Y)$. The solution to the partial doubly
stochastic projection under Fronbenius norm
is therefore $P_d(X)=P_D(Y)_{1:n,1:n'}$.

\subsection{Convergence Analysis}\label{sec:convergence}
Now we state the convergence theorem of FastPFP. The proof of the theorem relies on Lemma 1.
\begin{lemma}
Define $\mathbf{\Omega}=\{X|X\mathbf{1}\leq\mathbf{1},
X^{T}\mathbf{1}=\mathbf{1}, X \geq 0\}$. For two real matrices $X_1$
and $X_2$, $\|P_d(X_1)-P_d(X_2)\|_F \leq \|X_1-X_2\|_F$.
\end{lemma}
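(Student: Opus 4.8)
The plan is to recognize that $P_d$ is nothing but the metric (orthogonal) projection onto $\mathbf{\Omega}$ under the Frobenius inner product, and that \emph{every} such projection onto a closed convex set is non-expansive. Concretely, I regard the space of $n\times n'$ real matrices as a finite-dimensional Hilbert space with inner product $\langle A,B\rangle=\mathrm{tr}(A^{T}B)$ and induced norm $\|\cdot\|_F$, so that the defining optimization of $P_d(X)=\arg\min_{d\in\mathbf{\Omega}}\|X-d\|_F$ is exactly a projection in this space.

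First I would verify that $\mathbf{\Omega}$ is a nonempty, closed, convex set. Convexity follows because $\mathbf{\Omega}$ is the intersection of three convex sets — the halfspace region $\{X:X\mathbf{1}\le\mathbf{1}\}$, the affine subspace $\{X:X^{T}\mathbf{1}=\mathbf{1}\}$, and the nonnegative orthant $\{X:X\ge 0\}$ — each cut out by (in)equalities that are linear in the entries of $X$; this is exactly the observation already made in the paper that a convex combination of two partial doubly stochastic matrices is again partial doubly stochastic. Closedness is immediate since all constraints are non-strict, and nonemptiness follows from $n\ge n'$ (any partial permutation matrix lies in $\mathbf{\Omega}$). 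Hence for every real matrix $X$ the minimizer defining $P_d(X)$ exists and is unique, so $P_d$ is a well-defined single-valued map.

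The key tool is the variational characterization of the projection: for $p=P_d(X)$ one has $p\in\mathbf{\Omega}$ and $\langle X-p,\,c-p\rangle\le 0$ for all $c\in\mathbf{\Omega}$ (the ``obtuse angle'' condition, a standard first-order optimality consequence of convexity). Writing $p_1=P_d(X_1)$ and $p_2=P_d(X_2)$, I would apply this inequality twice, taking $c=p_2$ in the first instance and $c=p_1$ in the second:
\[
\langle X_1-p_1,\,p_2-p_1\rangle\le 0,\qquad \langle X_2-p_2,\,p_1-p_2\rangle\le 0.
\]
Adding these two (after negating the second so both involve $p_2-p_1$) cancels the cross terms and yields
\[
\|p_2-p_1\|_F^{2}\le \langle X_2-X_1,\,p_2-p_1\rangle\le \|X_1-X_2\|_F\,\|p_2-p_1\|_F,
\]
where the last step is the Cauchy--Schwarz inequality in the Frobenius inner product. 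Dividing by $\|p_2-p_1\|_F$ (the claim being trivial when this vanishes) gives $\|P_d(X_1)-P_d(X_2)\|_F\le\|X_1-X_2\|_F$, as desired.

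There is no genuinely hard step here: the result is the classical non-expansiveness of projection onto a closed convex set, so the only things to get right are (i) confirming that $\mathbf{\Omega}$ really is closed and convex, so the projection is well-defined and the variational inequality applies, and (ii) the sign bookkeeping when combining the two inequalities. The one point worth flagging is that this argument uses only the \emph{definition} of $P_d$ as the Frobenius-norm minimizer over $\mathbf{\Omega}$, not the slack-variable/successive-projection scheme of Section~\ref{PDSN}; that scheme merely provides a way to compute the same projection, so the lemma holds independently of it.
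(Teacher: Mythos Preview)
Your proof is correct and follows exactly the paper's approach: the paper simply observes that $\mathbf{\Omega}$ is a closed convex set and invokes the standard non-expansiveness of metric projections onto such sets, citing \cite{Projection2} for the details. You have merely unpacked that citation by writing out the classical obtuse-angle/Cauchy--Schwarz argument, so there is no substantive difference.
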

\begin{proof}
Since $\mathbf{\Omega}$ is a closed convex set, $P_d(\cdot)$ is a
nonexpansive projection, which means $\|X_1-X_2\|_F \geq
\|P_d(X_1)-P_d(X_2)\|_F$. See \cite{Projection2} for the proof of
nonexpansivity of projections onto convex sets.
\end{proof}
\begin{theorem}
Given real matrix $X^{(0)}$, $\alpha\in [0,1]$ and $\|A\otimes
A'\|_2<\epsilon$, the series
\begin{equation}\label{FastPFP}
X^{(t+1)} = (1-\alpha)X^{(t)} + \alpha P_d(AX^{(t)}A'+\lambda K),
\end{equation}
converges at rate $1-\alpha+\alpha\epsilon$.
\end{theorem}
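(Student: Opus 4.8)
The plan is to recognize the recursion (\ref{FastPFP}) as a fixed-point iteration $X^{(t+1)} = T(X^{(t)})$ for the map $T(X) = (1-\alpha)X + \alpha P_d(AXA' + \lambda K)$, and to show that $T$ is a contraction on $(R^{n\times n'}, \|\cdot\|_F)$ with modulus exactly $1-\alpha+\alpha\epsilon$. Since $R^{n\times n'}$ equipped with the Frobenius norm is a finite-dimensional, hence complete, normed space, the Banach fixed-point theorem then yields a unique fixed point $X^*$ together with the geometric error bound $\|X^{(t)} - X^*\|_F \leq (1-\alpha+\alpha\epsilon)^t \|X^{(0)} - X^*\|_F$, which is precisely linear convergence at rate $1-\alpha+\alpha\epsilon$.

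To establish the contraction I would take any two matrices $X_1, X_2$ and estimate $\|T(X_1)-T(X_2)\|_F$. Writing $T(X_1)-T(X_2) = (1-\alpha)(X_1-X_2) + \alpha\bigl(P_d(AX_1A'+\lambda K) - P_d(AX_2A'+\lambda K)\bigr)$ and applying the triangle inequality splits the estimate into the affine part, contributing $(1-\alpha)\|X_1-X_2\|_F$, and the projection part. For the latter, Lemma 1 (nonexpansiveness of $P_d$) removes the projection and the additive constant $\lambda K$ cancels, giving $\|P_d(AX_1A'+\lambda K) - P_d(AX_2A'+\lambda K)\|_F \leq \|A(X_1-X_2)A'\|_F$.

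The crux, and the step I expect to be the main obstacle, is the bilinear bound $\|A(X_1-X_2)A'\|_F \leq \epsilon\,\|X_1-X_2\|_F$. Here I would vectorize: using the identity $\mathrm{vec}(AXA') = (A'^T \otimes A)\,\mathrm{vec}(X)$ together with the symmetry $A' = A'^T$ of the (undirected-graph) adjacency matrix, one obtains $\|A(X_1-X_2)A'\|_F = \|(A'\otimes A)\,\mathrm{vec}(X_1-X_2)\|_2 \leq \|A'\otimes A\|_2\,\|X_1-X_2\|_F$. Since $\|A'\otimes A\|_2 = \|A\|_2\|A'\|_2 = \|A\otimes A'\|_2 < \epsilon$ by hypothesis, the desired bound follows. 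Combining the two parts gives $\|T(X_1)-T(X_2)\|_F \leq (1-\alpha+\alpha\epsilon)\|X_1-X_2\|_F$, as claimed.

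Finally I would conclude by invoking the Banach fixed-point theorem. One caveat worth flagging is that the stated quantity is a genuine contraction modulus only when $1-\alpha+\alpha\epsilon < 1$, i.e. when $\epsilon < 1$ for $\alpha > 0$; this is the implicit spectral condition on $A\otimes A'$ under which the theorem asserts convergence, and in practice it is arranged by normalizing the adjacency matrices. The boundary cases $\alpha = 0$ (stationary) and $\alpha = 1$ (pure projected fixed-point, rate $\epsilon$) are recovered as special instances of the same estimate.
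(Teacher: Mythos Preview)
Your proof is correct and follows essentially the same approach as the paper: split via the triangle inequality into the $(1-\alpha)$ part and the $\alpha$ part, apply Lemma~1 to strip off $P_d$ (the $\lambda K$ term cancels), and then vectorize with the Kronecker product to extract the factor $\|A\otimes A'\|_2<\epsilon$. The only difference is packaging---you show the map $T$ is a contraction and invoke Banach, whereas the paper bounds the ratio $\|X^{(t+1)}-X^{(t)}\|_F/\|X^{(t)}-X^{(t-1)}\|_F$ of successive differences directly---but the underlying computation is identical, and your explicit remarks on the symmetry of $A'$ in the vectorization identity and on the implicit requirement $\epsilon<1$ are welcome clarifications the paper leaves unstated.
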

\begin{proof}
Denote $vec(X)$, $vec(Y)$, $vec(K)$ and $A\otimes A'$ by
$\mathbf{x}$, $\mathbf{y}$, $\mathbf{k}$ and $\mathbf{A}$,
respectively, where $vec(\cdot)$ is the vectorization of a matrix
and $\otimes$ is the Kronecker product. Then $Y^{(t)}=AX^{(t)}A'+\lambda
K$ is equivalent to
$\mathbf{y}^{(t)}=\mathbf{A}\mathbf{x}^{(t)}+\lambda \mathbf{k}$.
Thus
\begin{align}
\|X^{(t+1)}-X^{(t)}\|_F &\leq(1-\alpha)\cdot\|X^{(t)}-X^{(t-1)}\|_F \\
&+\alpha\cdot\|P_d(AX^{(t)}A'+\lambda K) \\
&-P_d(AX^{(t-1)}A'+\lambda
K)\|_F.
\end{align}
Also
\begin{align}
&\|P_d(AX^{(t)}A'+\lambda K)-P_d(AX^{(t-1)}A'+\lambda K)\|_F \\
&\leq\|AX^{(t)}A'-AX^{(t-1)}A'\|_F =\|\mathbf{A}\mathbf{x}^{(t)}-\mathbf{A}\mathbf{x}^{(t-1)}\|_2 \\
&\leq\|\mathbf{A}\|_2\cdot\|\mathbf{x}^{(t)}-\mathbf{x}^{(t-1)}\|_2  \\
&<\epsilon\cdot\|X^{(t)}-X^{(t-1)}\|_F.
\end{align}
Thus
\begin{equation}
\frac{\|X^{(t+1)}-X^{(t)}\|_F}{\|X^{(t)}-X^{(t-1)}\|_F}<1-\alpha+\alpha\epsilon.
\end{equation}
\end{proof}
\textbf{Remarks.} In practice, we do not have to scale $A$ and $A'$
such that $\|A\otimes A'\|_2 < 1$. We can let $X=X/\max(X)$ at the
end of each iteration to prevent numerical instability.
\subsection{Discretization}
After the convergence of the projected fixed-point method, the resulting
matrix $X$ is discretized to obtain the partial permutation matrix
$P$. We use the same discretization method in \cite{SM}, which is a
greedy algorithm, instead of the expensive Hungarian method. This
greedy discretization algorithms works as follows: \\

\noindent \textbf{Step 1.} Initialize an $n\times n'$ zero-valued
matrix $P$  and a set $L$ containing all its
index $(i,j)$.\\
\textbf{Step 2.} Given a matrix $X$, find the index $(i*,j*)$ from
$L$ such that $X_{i*j*}=\arg \max_{(i,j)\in L}X_{ij}$. Set
$P_{i*j*}=1$. Remove all indices $(i,j)$ in $L$
that $i=i*$ or $j=j*$. \\
\textbf{Step 3.} Repeat step 2 until $L$ is empty. Return $P$. \\

\noindent The whole algorithm is summarized in Algorithm 1.  Note
that $n\geq n'$ is assumed.
\begin{algorithm}
\caption{Fast Projected Fixed-Point} \KwIn{$A, A', K$} \KwOut{$P$}
Initialize $X$ and $Y$
\\
\Repeat{$X$ converges}{$Y_{1:n,1:n'}=AXA'+\lambda K$ \\
\Repeat{$Y$
converges}{$Y=Y+(\frac{1}{n}I+\frac{\mathbf{1}^{T}Y\mathbf{1}}{n^2}I
- \frac{1}{n}Y ) \mathbf{11}^{T} - \frac{1}{n}\mathbf{11}^{T}Y $\\
$Y=\frac{Y+|Y|}{2}$ } $X=(1-\alpha)X + \alpha Y_{1:n,1:n'}$ \\
$X=X/\max(X)$ } Discretize $X$ to obtain $P$.
\end{algorithm}
The matching accuracy of FastPFP is sensitive to initialization. We
recommend the initialization $X=\mathbf{11}^{T}_{n\times n'}/nn',
Y=0_{n\times n}$, as we used in all our experiments. Step 9 is
needed for numerical stability. For $n=n'$, regardless of fast and
sparse matrix computation, Step 3 in Algorithm 1 requires $O(n^3)$
operations per iteration. Step 4 to Step 7 requires $O(n^2)$
operations per iteration. Step 8 requires $O(n^2)$ operations per
iteration. Thus, the algorithm has time complexity $O(n^3)$ per
iteration and space complexity $O(n^2)$.

\section{Experiments}\label{sec:exp}
\begin{figure*}
\centering
\subfigure[No noise (graph
isomorphism)]{\includegraphics[height=3.5cm,width=9cm]{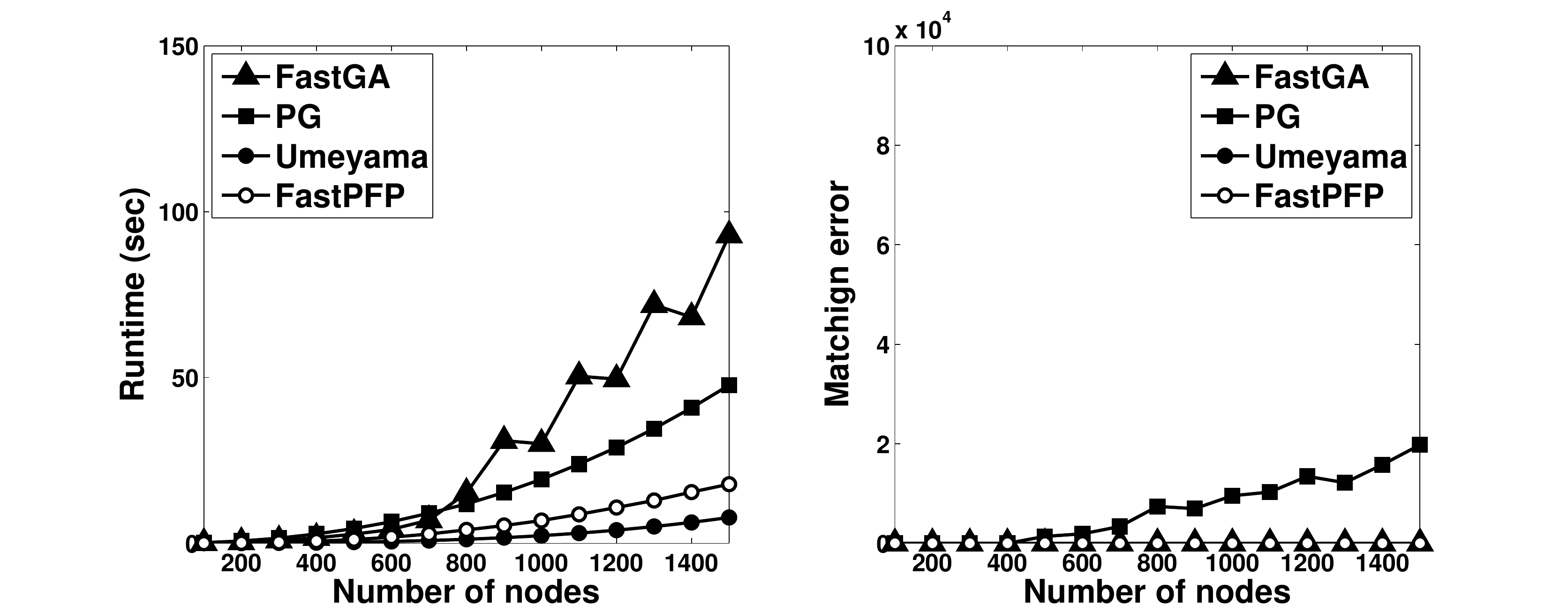}}
\subfigure[$n$ edges
edition]{\includegraphics[height=3.5cm,width=9cm]{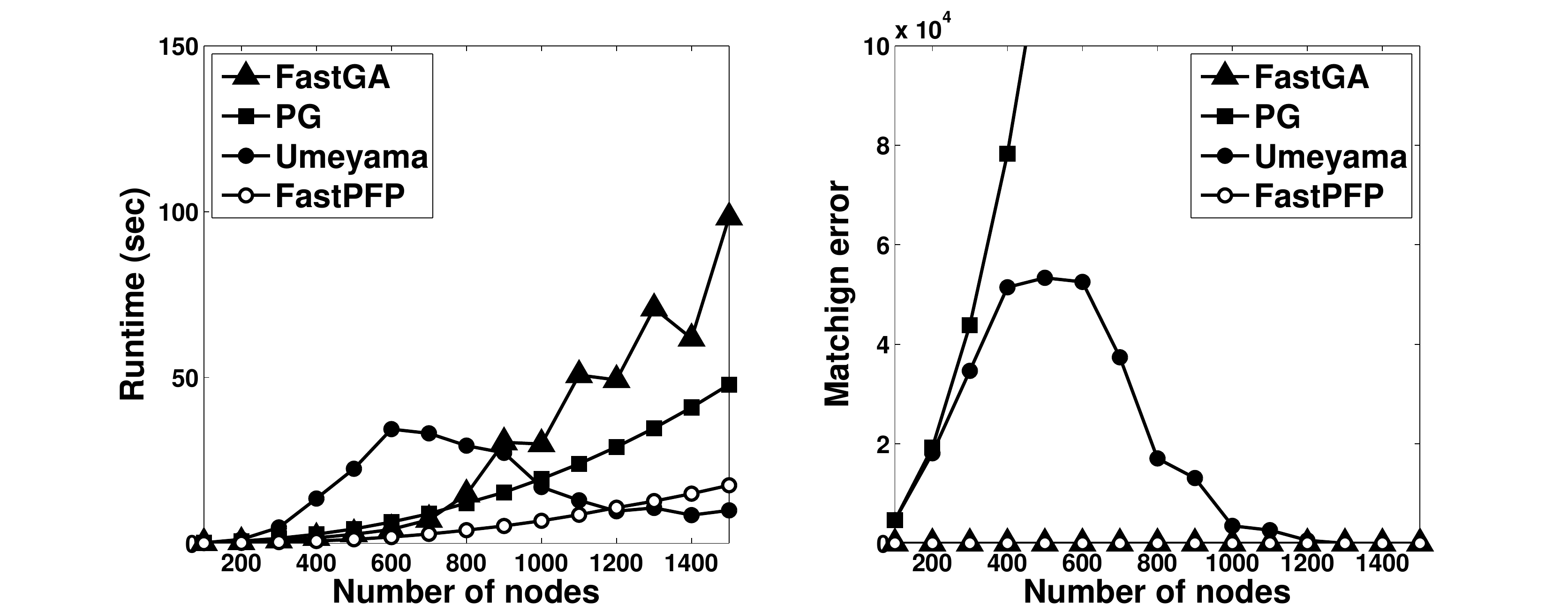}}
\subfigure[10\% nodes deletion (subgraph
isomorphism)]{\includegraphics[height=3.5cm,width=9cm]{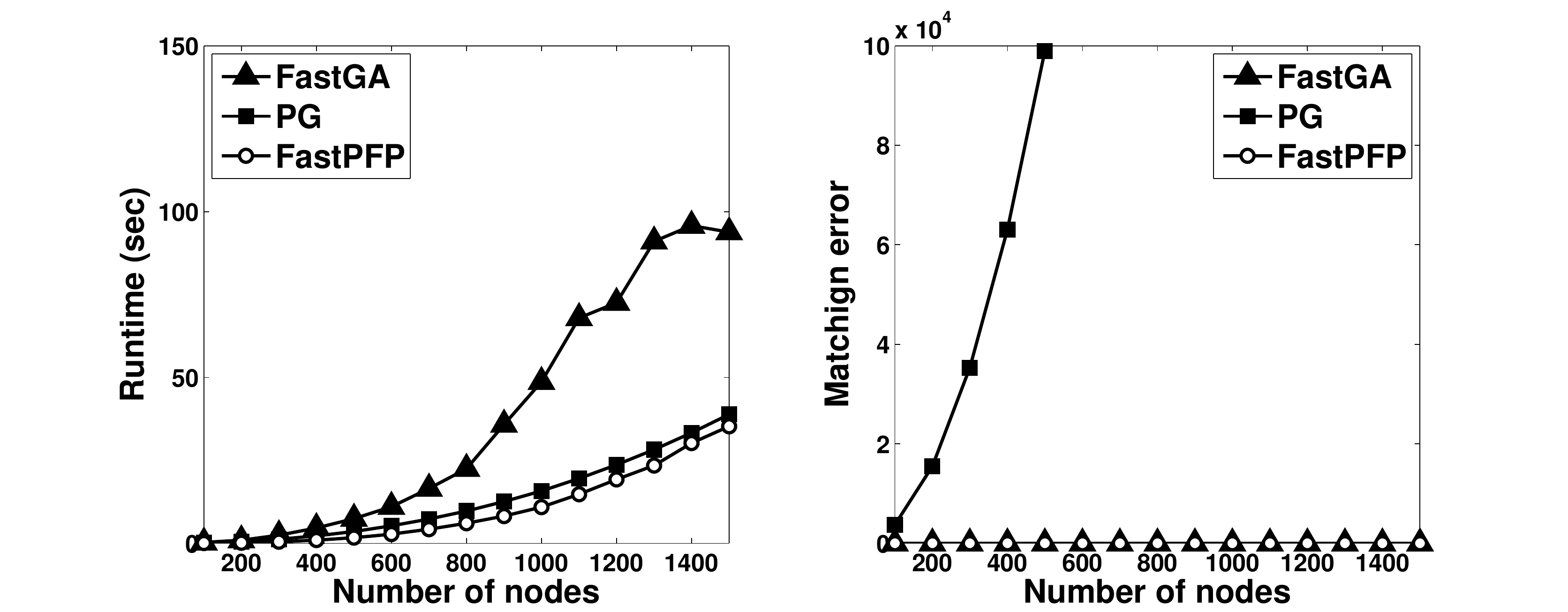}}
\subfigure[10\% nodes deletion
 + $n$ edges
edition]{\includegraphics[height=3.5cm,width=9cm]{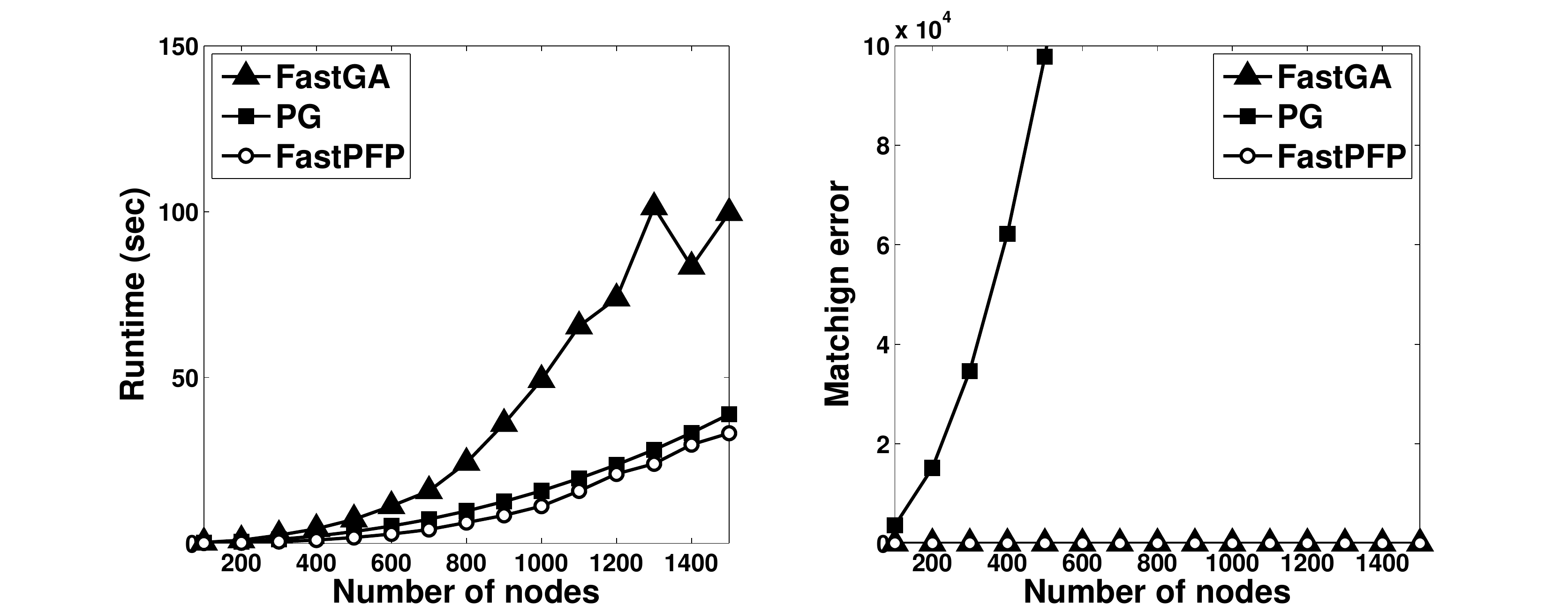}}
\caption{Synthetic random graph matching} \label{f1}
\end{figure*}

In this section, we evaluate the performance in large graph matching of four fast algorithms: FastPFP, Projected Gradient (PG) [\ref{pg}], FastGA and Umeyama's method. All of them have time complexity $O(n^3)$ (per iteration). Some other graph matching algorithms were not compared due to their inability to match large graphs (e.g. graphs of 1000 nodes) in a reasonable time.

For FastGA, FastPFP, and PG, the same greedy
discretization described in the above section was used. The
convergence criteria are: $\max(|X^{(t+1)}-X^{(t)}|)<\epsilon_1$ or
more than $I_0$ iterations for the first loop (projected fixed-point updating for FastPFP or soft-assignment for FastGA) and
$\max(|Y^{(t+1)}-Y^{(t)}|)<\epsilon_2$ or more than $I_1$ iterations
for the second loop (doubly stochastic projection or normalization).
For the fairness of comparison, we tuned and fixed the parameters of each
algorithm throughout all our experiments. The parameters were
well-tuned to achieve the best performance taking both accuracy and
efficiency into consideration. For FastPFP, we fixed $\alpha=0.5$ in all experiments. The sensitivity study of this parameter will be given in the next section. Other parameters were also tried to
validate the experimental results. All our experiments were
implemented in MATLAB in a 3 GHz Intel Core2 PC.
\\\textbf{Remarks.} (1) In some datasets, the experiments of Umeyama's method were not conducted due to its
inability of matching graphs of different sizes. (2) In the weighted graphs matching experiments, graphs are fully connected. Each node represents a keypoint and each edge represents Euclidean distance between two keypoints. We tried Delaunay triangularization to sparsify the graphs. But the matching quality was poor after applying the sparsification. Therefore, no sparsification was used in the experiments, for simplicity and better matching results.

\subsection{Synthetic Random Graphs}
In this set of experiments, unweighted random graphs were generated
uniformly with $50\%$ connectivity. The size of the graphs ranges
from 100 to 1,500. Four different types of graph matching were
tested: two isomorphic graphs with (a) no noise (graph isomorphism),
(b) $n$ edges edition (edge edition means if $A_{i,j}=1$, set
$A_{i,j}=0$ and vice versa), (c) $10\%$ nodes deletion (subgraph
isomorphism), (d) $n$ edges edition + $10\%$ nodes deletion. The
runtime and matching error (measure by $\| A-XA'X^T \|^2_F$ - $\| A-X_{GT}A'X_{GT}^T \|^2_F$, where $X_{GT}$ is the ground truth matching matrix) were
recorded, as shown in Fig. \ref{f1}.

As observed from Fig. \ref{f1}, FastPFP and FastGA were able to find the
global optimal solutions in all the above graph matching tasks. But FastPFP is about $3\sim5$
times faster than FastGA. PG were stuck in poor local minimum in most cases.
Umeyama's method achieved better performance than other algorithms for the isomorphic graph matching in terms of both speed and accuracy.
However, its performance was poor in matching non-isomorphic graphs. Roughly
speaking, the more isomorphic of two graphs, the better matching
results Umeyama's method had. In the left part of Fig. \ref{f1}(b) ,
the matching error increases with the number of nodes from 100 to
500. This is natural since the numeric range of the matching error
increases with the number of nodes given the same percent of
connectivity. From 600 nodes on, the matching error decreases with
the number of nodes. This is because the graphs are closer to
isomorphic as the number of nodes increases since there are only $n$
edges edition for total $O(n^2)$ edges.

\begin{figure*}
\centering
\subfigure[SIFT features (820 nodes vs. 772 nodes)]{\includegraphics[height=3.5cm,width=8.8cm]{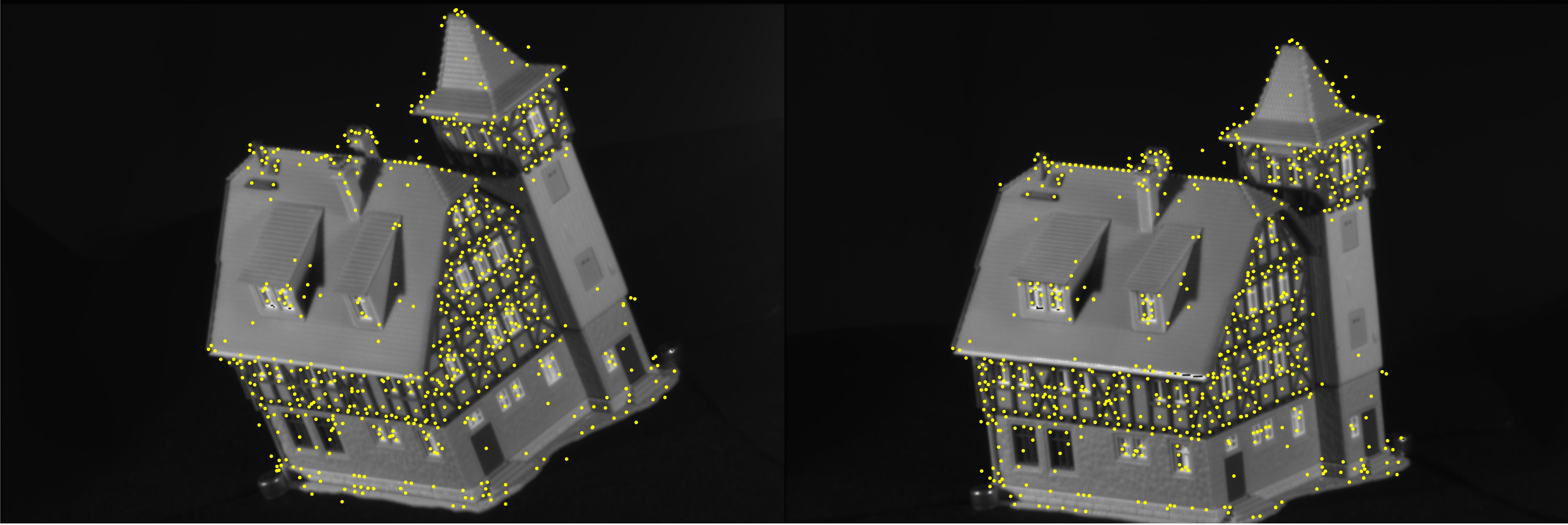}}
\subfigure[From top to bottom: PG, FastGA and FastPFP]{\includegraphics[height=7cm,width=\textwidth]{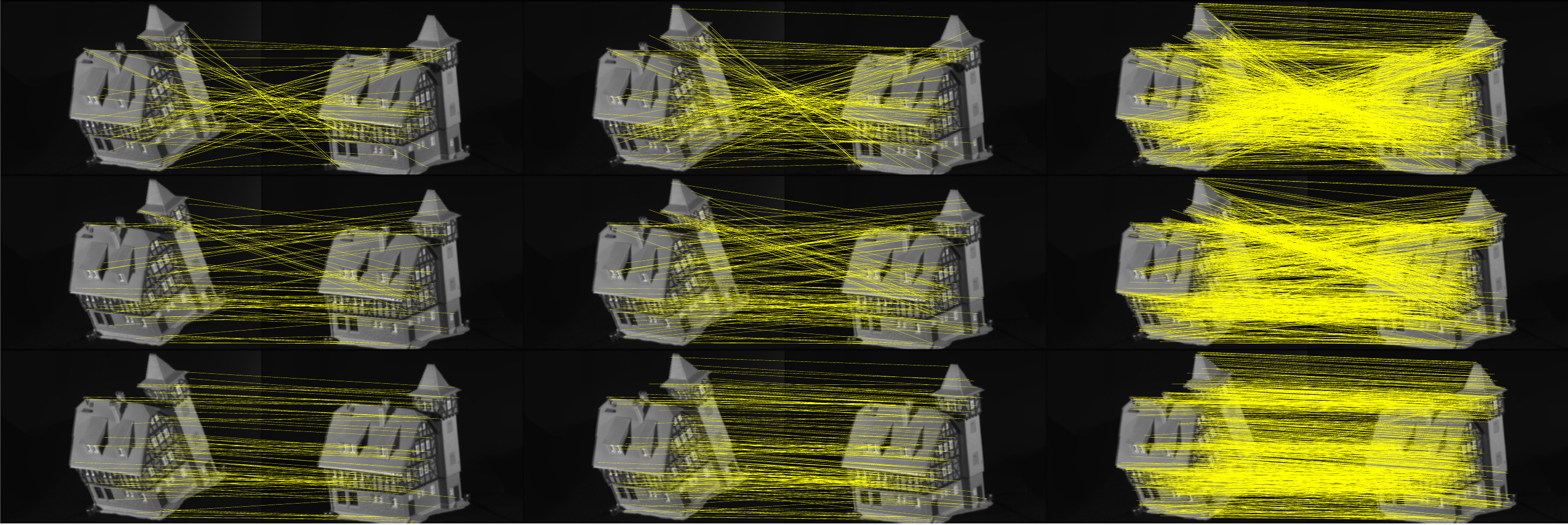}}
\subfigure[SIFT features (820 nodes vs. 772 nodes)]{\includegraphics[height=3.5cm,width=8.8cm]{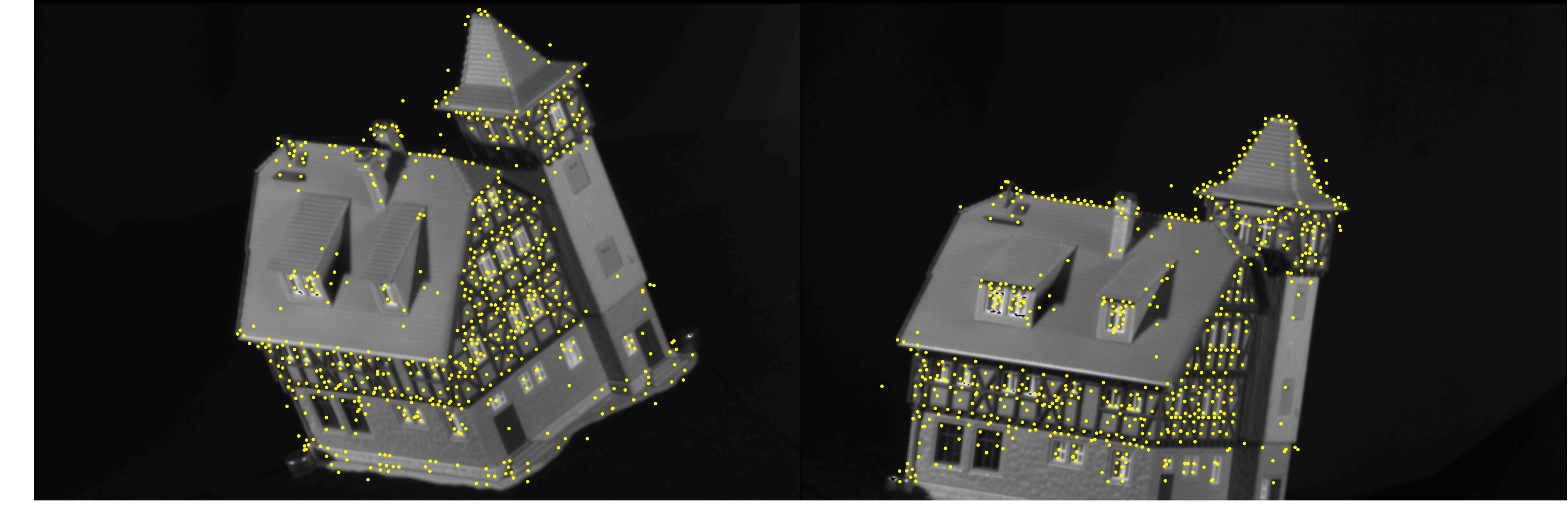}}
\subfigure[From top to bottom: PG, FastGA and FastPFP]{\includegraphics[height=7cm,width=\textwidth]{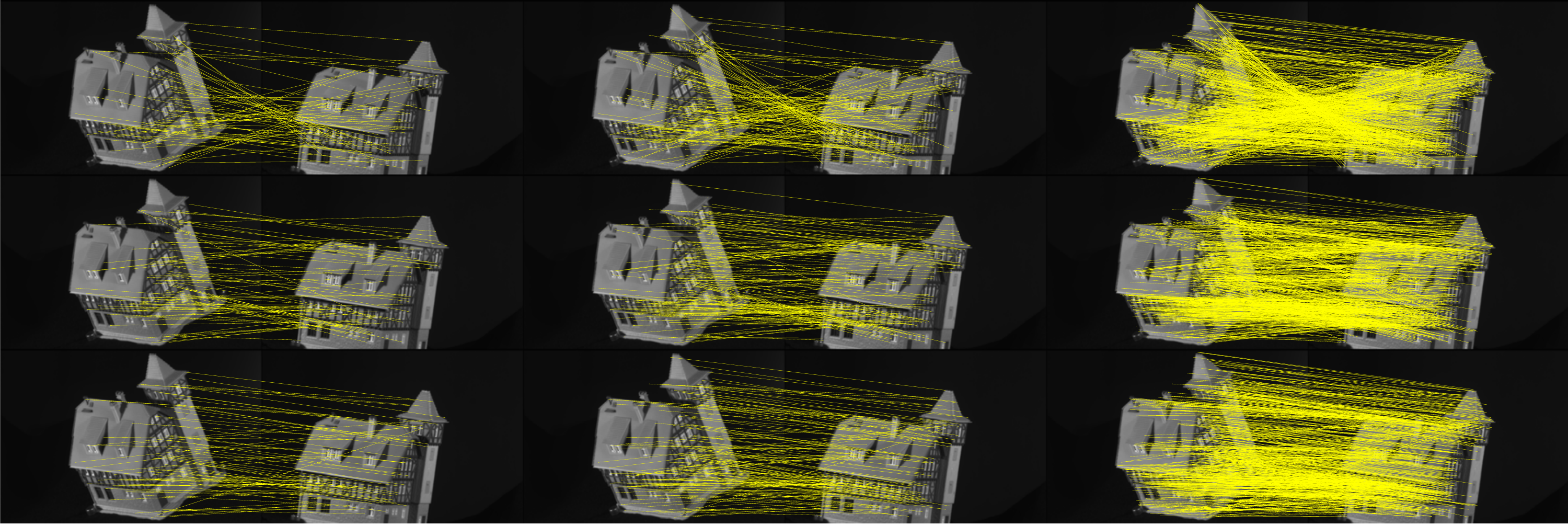}}
\caption{Graphs from CMU house sequence matching. For (c)(f), Left column: 10\% matching displayed. Middle column: 20\%
matching displayed. Right column: 100\% matching displayed.}
\label{CMU}
\end{figure*}
\begin{figure*}
\centering
\includegraphics[height=6cm,width=15cm]{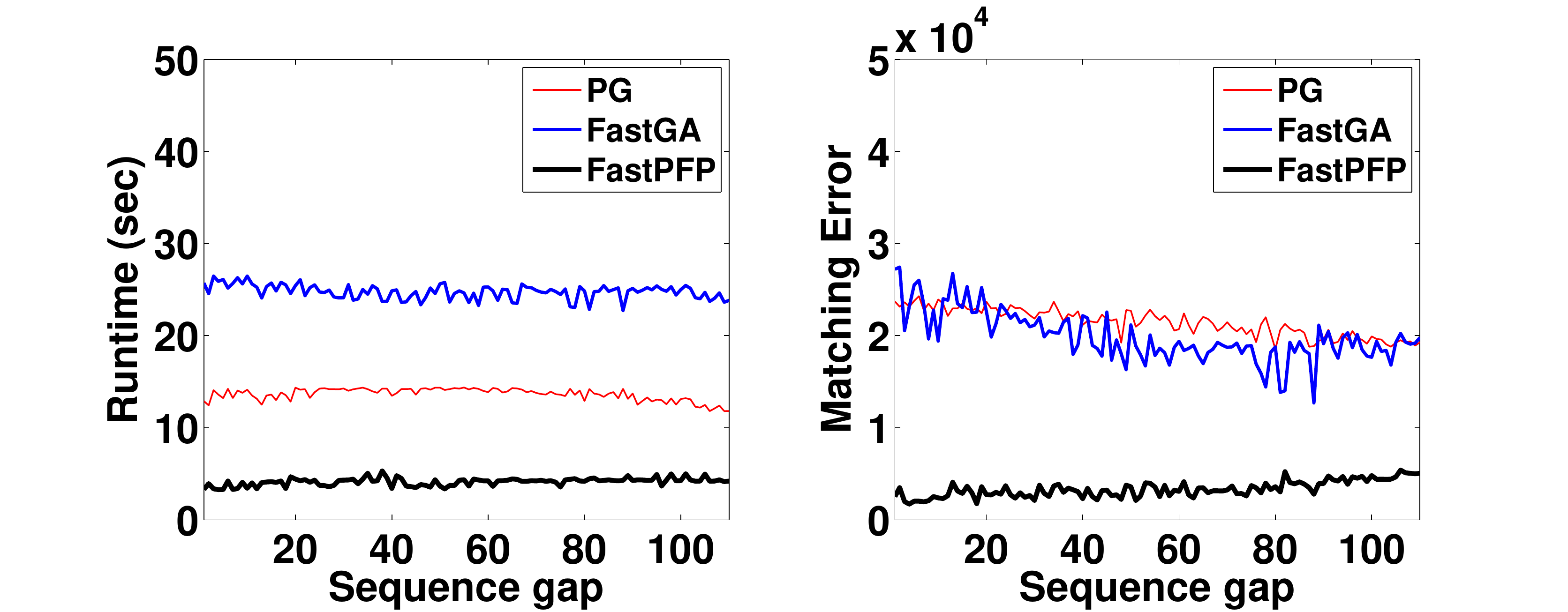}
\caption{CMU house sequence} \label{CMU_plots}
\end{figure*}
\begin{figure*}
\centering
\subfigure[SIFT features (1438 nodes vs. 1443 nodes)]{\includegraphics[height=3cm,width=8cm]{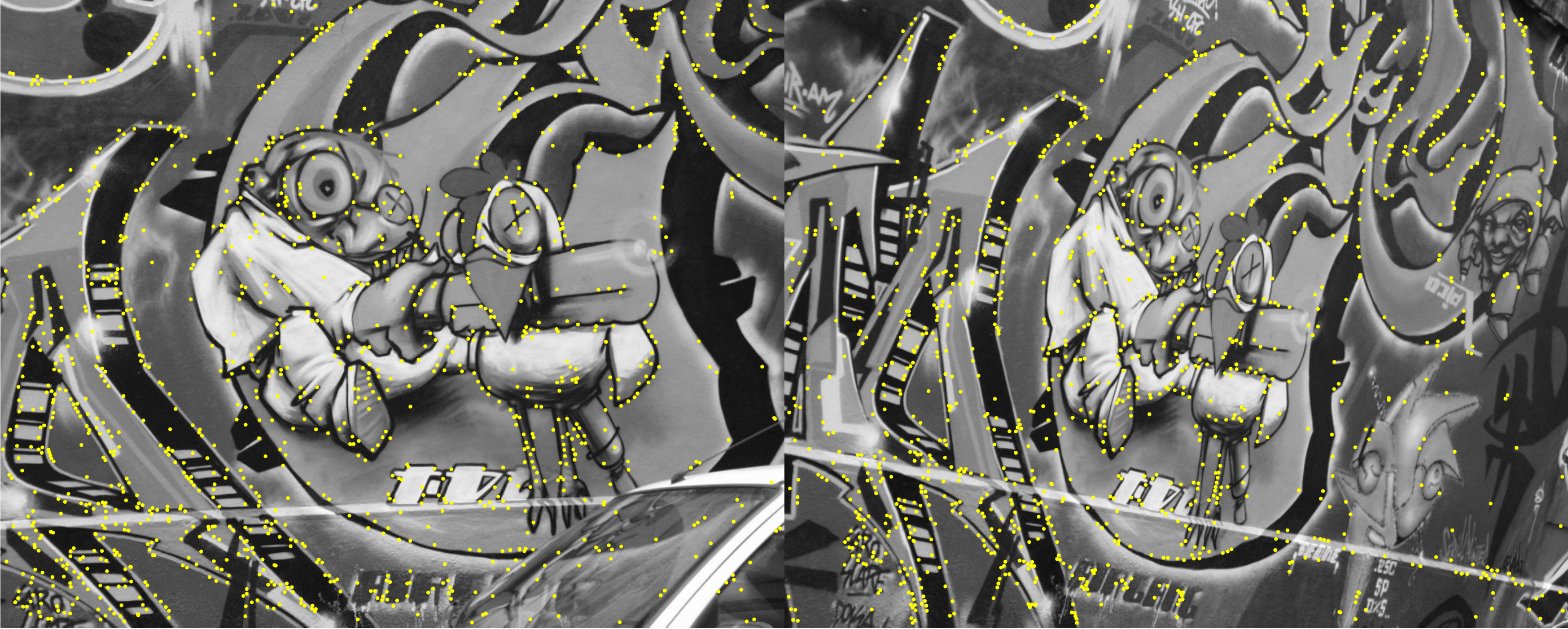}}
\subfigure[From top to bottom: PG, FastGA and FastPFP]{\includegraphics[height=7cm,width=\textwidth]{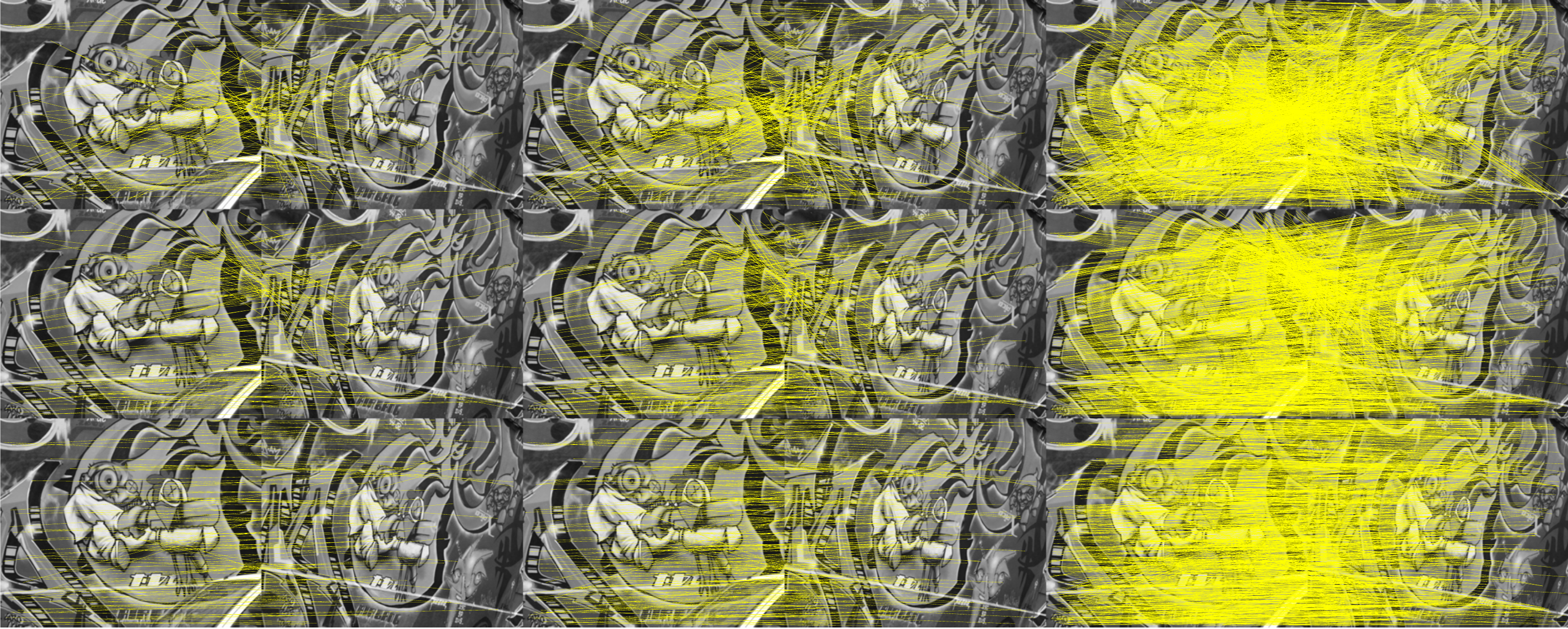}}
\subfigure[SIFT features (961 nodes vs. 960 nodes)]{\includegraphics[height=3cm,width=8cm]{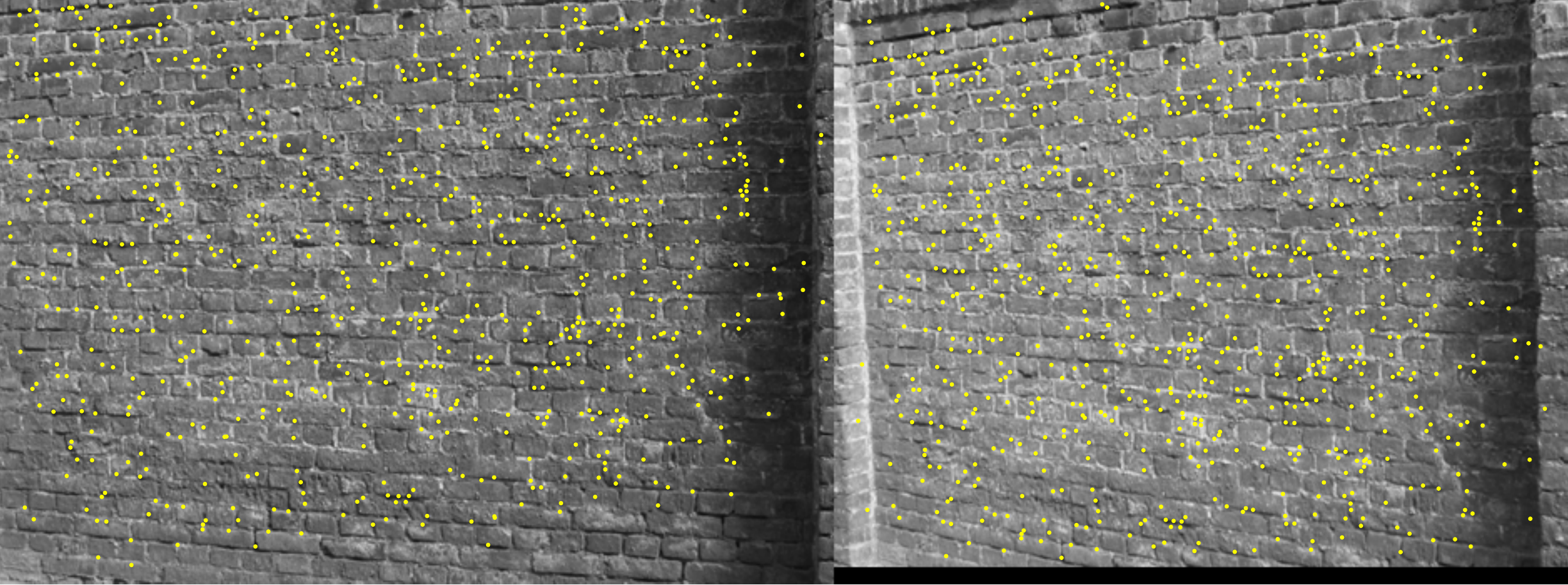}}
\subfigure[From top to bottom: PG, FastGA and FastPFP]{\includegraphics[height=7cm,width=\textwidth]{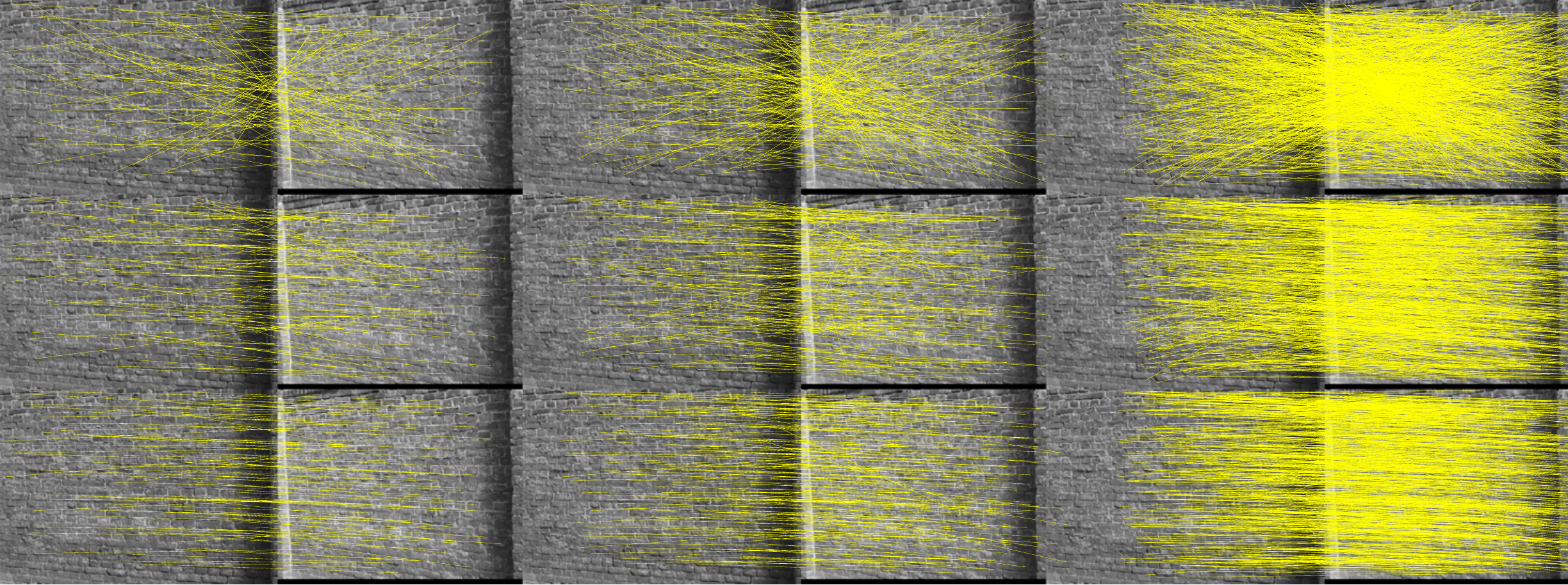}}
\caption{Graphs from real images matching. For (c)(f), left column: 10\% matching displayed. Middle column: 20\%
matching displayed. Right column: 100\% matching displayed.}
\label{fig:example}
\end{figure*}
\begin{figure*}
\centering
\subfigure[SIFT features (393 nodes vs. 391 nodes)]{\includegraphics[height=3cm,width=8cm]{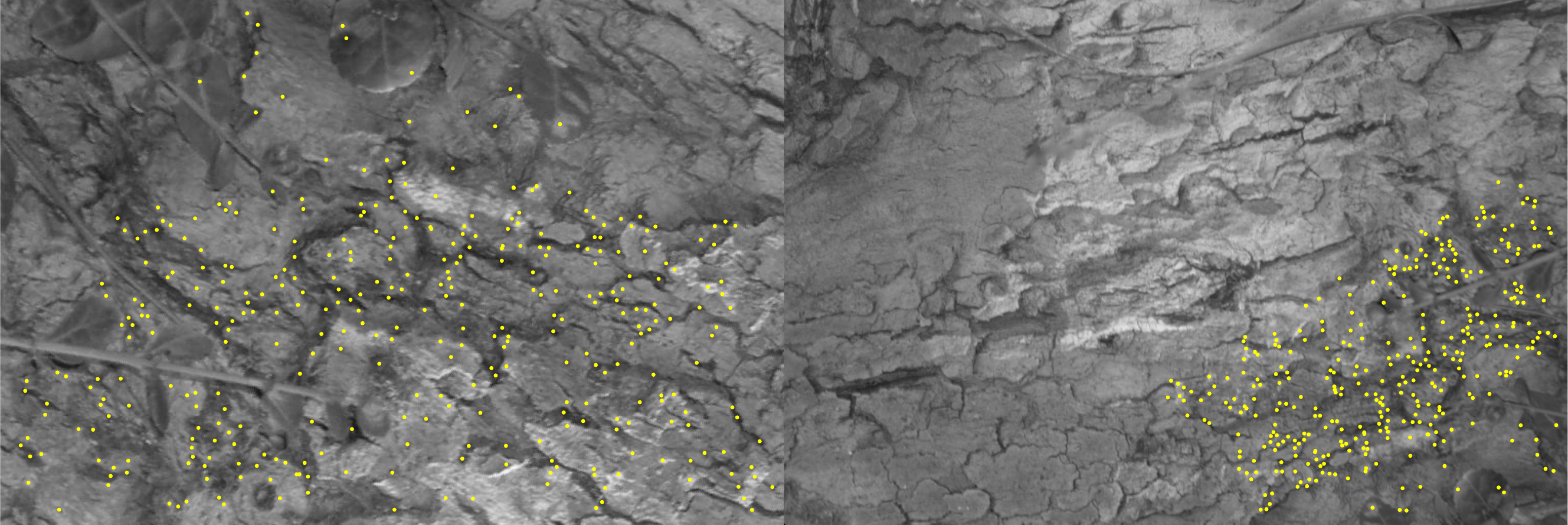}}
\subfigure[From top to bottom: PG, FastGA and FastPFP]{\includegraphics[height=7cm,width=\textwidth]{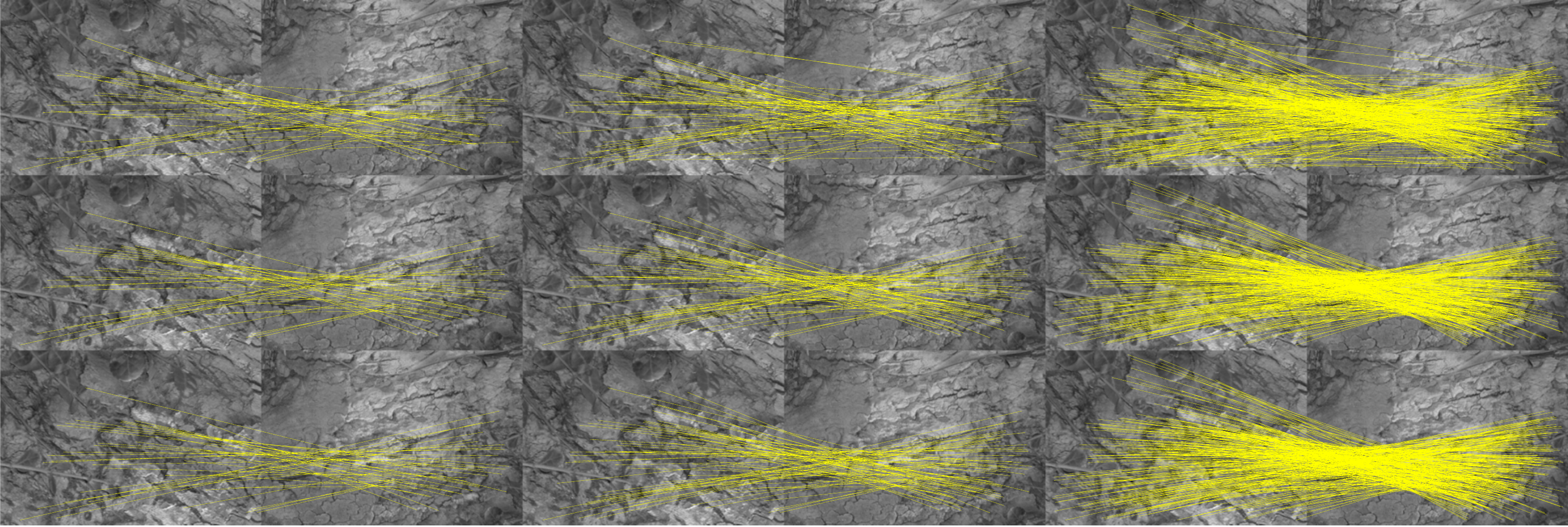}}
\subfigure[SIFT features (542 nodes vs. 537 nodes)]{\includegraphics[height=3cm,width=8cm]{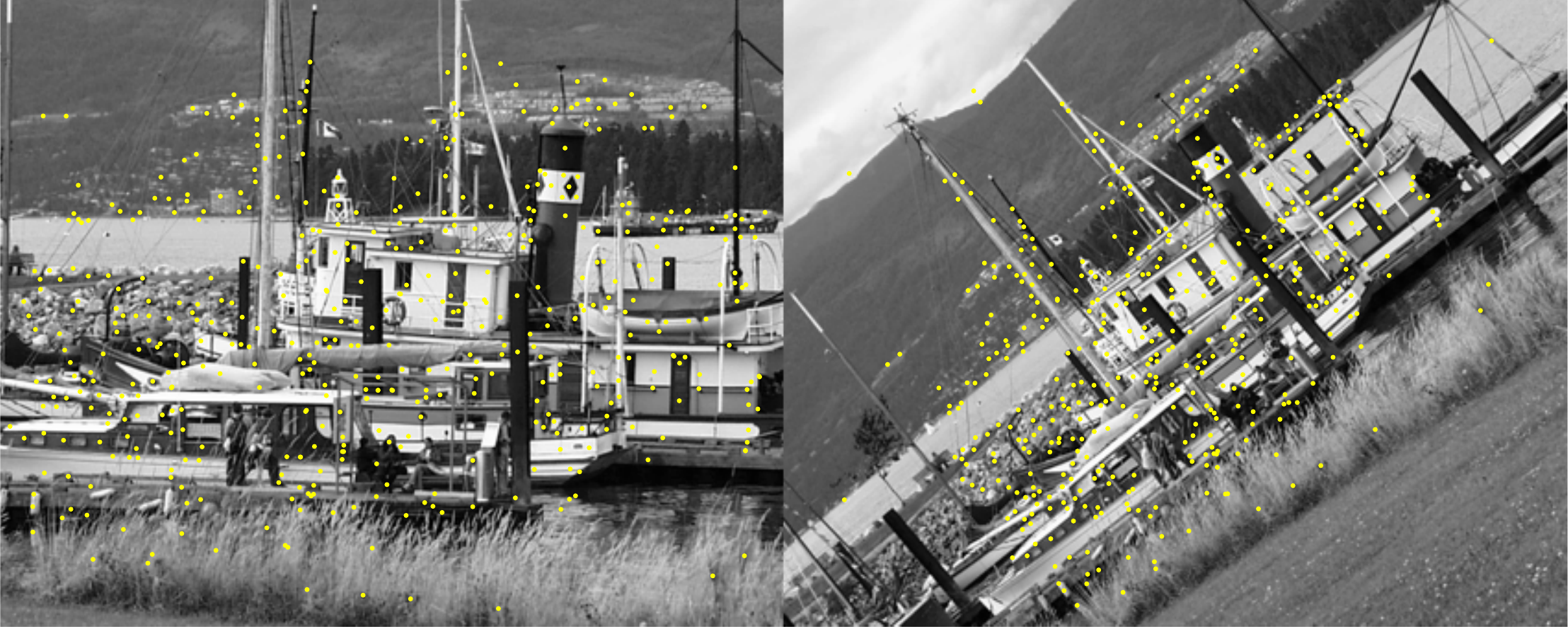}}
\subfigure[From top to bottom: PG, FastGA and FastPFP]{\includegraphics[height=7cm,width=\textwidth]{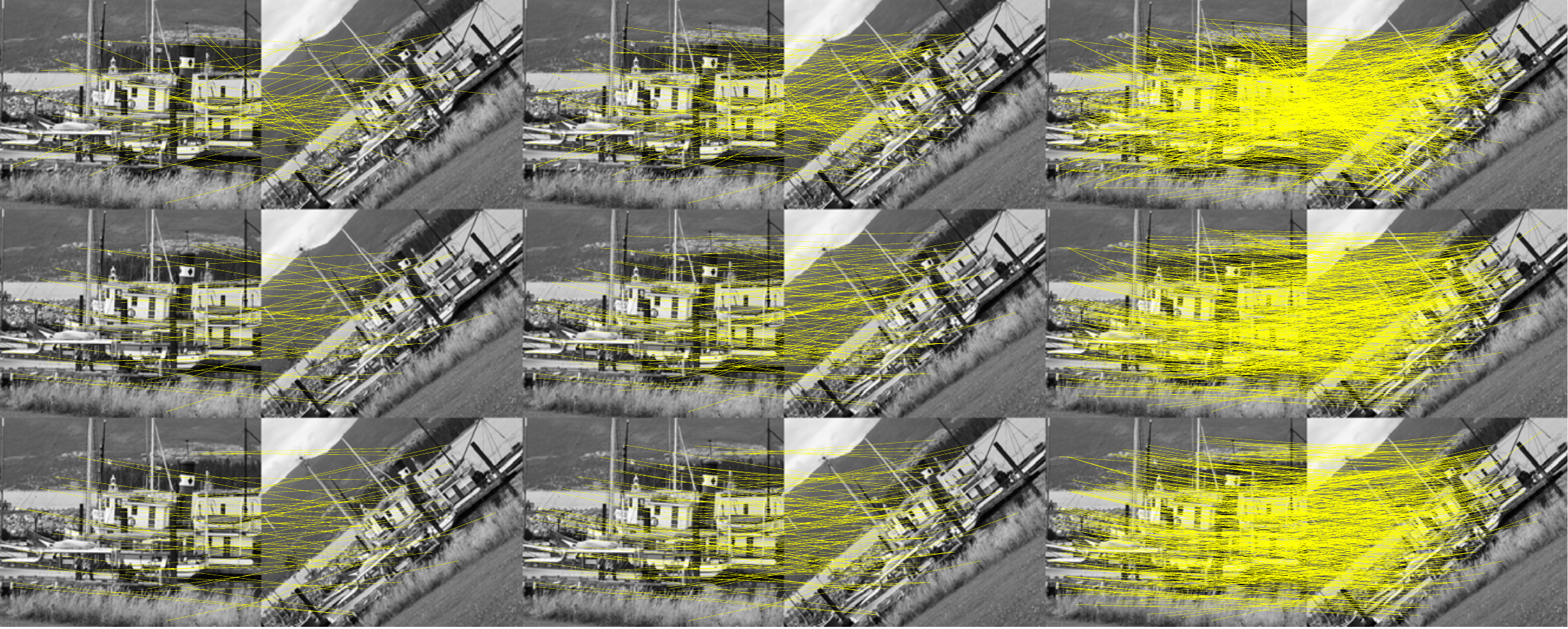}}
\caption{Graphs from real images matching. For (c)(f), left column: 10\% matching displayed. Middle column: 20\%
matching displayed. Right column: 100\% matching displayed.}
\label{fig:example}
\end{figure*}

\subsection{Graphs From Image Sequences}
In this set of experiments, weighted graphs were constructed from
the CMU House sequence images\footnote{\url{http://vasc.ri.cmu.edu/idb/html/motion/}},
which has been wildly used for testing graph matching algorithms. In
the previous literature, only small graphs (30 nodes) were extracted
from the images for matching. To demonstrate scalability of FastPFP,
we applied SIFT feature descriptor \cite{SIFT} to all CMU House
sequence images. Roughly 800 keypoints were extracted for each image
and the number of keypoints varies from image to image. All keypoints
in an image were used for the construction of a graph. For each
image, a graph whose nodes represent keypoints and edges (matrix $A$)
represent Euclidean distance between keypoints in that image was
constructed. For a visualization, we show in Fig. \ref{CMU}
feature correspondence for image 0 vs. image 110. Results of PG, FastGA
and FastPFP were shown.

We matched the first image and subsequent images, i.e. image $0$ vs.
image 1, image 0 vs. image 2, ..., and image 0 vs. image 110. The
runtime and matching error (measured as $\| A-XA'X^T \|_F^2$) were recorded for different algorithms.
The results are shown in Fig. \ref{CMU_plots}. FastPFP is about 6
times faster than FastGA, with even much lower matching error.

\subsection{Graphs From Real Images}
In this set of experiments, attributed and weighted graphs were constructed from
real images, chosen from the four image sets: Graffiti, Wall, Bark
and
Boat\footnote{\url{http://www.robots.ox.ac.uk/~vgg/research/affine/}}.
The graphs were constructed as follows. We first applied SIFT
descriptor to extract 128-dimensional features for each image. Then a simple feature selection was
performed as follows: a feature of an image was selected if it has
the similarity (inner product) to all features of its counterpart
image above a threshold. From the selected features, a graph was
constructed. Each node represents a keypoint, each node attribute represents a SIFT 128-dimensional feature vector, and each edge represents the
Euclidean distance between nodes. Each element $K_{ij}$ of $K$ is set to be the inner product of two SIFT feature vectors $i$ and $j$.
See Fig. \ref{fig:example} for visualization of the matching results for PG, FastGA
and FastPFP. The experimental results are listed in Table~\ref{real_runtime} and \ref{real_error}. The
runtime and matching error (measured as $\frac{1}{2}\| A-XA'X^T \|^2_F + \lambda\| B - XB' \|^2_F$) were recorded. $\lambda=1$ is used. The sensitivity study of parameter $\lambda$ is given in Section \ref{sec:para}.
Again, FastPFP demonstrated the best performance: it is about $3\sim6$
times faster than FastGA while usually achieving a much lower matching error.
\begin{table}
\begin{center}
\caption{Graphs from real images: runtime (sec)}\label{real_runtime}
\begin{tabular}{c|c|c|c}
\hline
Graph pair & PG & FastGA & FastPFP\\
\hline
Graffiti & 51.60 & 87.89 & $\mathbf{13.52}$ \\
Wall &19.94 & 32.25  & $\mathbf{4.92}$\\
Bark &2.84 & 2.72 & $\mathbf{0.52}$ \\
Boat &5.59 & 9.06  & $\mathbf{1.36}$ \\
\hline
\end{tabular}
\caption{Graphs from real images: matching error}\label{real_error}
\begin{tabular}{c|c|c|c}
\hline
Graph pair & PG & FastGA & FastPFP \\
\hline
Graffiti & 2.75$\times 10^4$ & 3.98$\times 10^4$ & $\mathbf{5.92\times 10^3}$\\
Wall & 1.31$\times 10^4$ & 1.01$\times 10^4$ & $\mathbf{2.39\times10^3}$ \\
Bark & 3.07$\times10^3$ & $\mathbf{2.21\times10^3}$ &  $\mathbf{2.21\times10^3}$ \\
Boat & 4.77$\times10^3$ & 3.18$\times10^3$ & $\mathbf{1.80\times10^3}$ \\
\hline
\end{tabular}
\end{center}
\end{table}

\subsection{Graphs From 3D Points}
\begin{figure*}
\centering
\subfigure[Face Pair 1 and Face Pair 2 (both 392 nodes vs. 392 nodes)]{
\includegraphics[height=4cm,width=7cm]{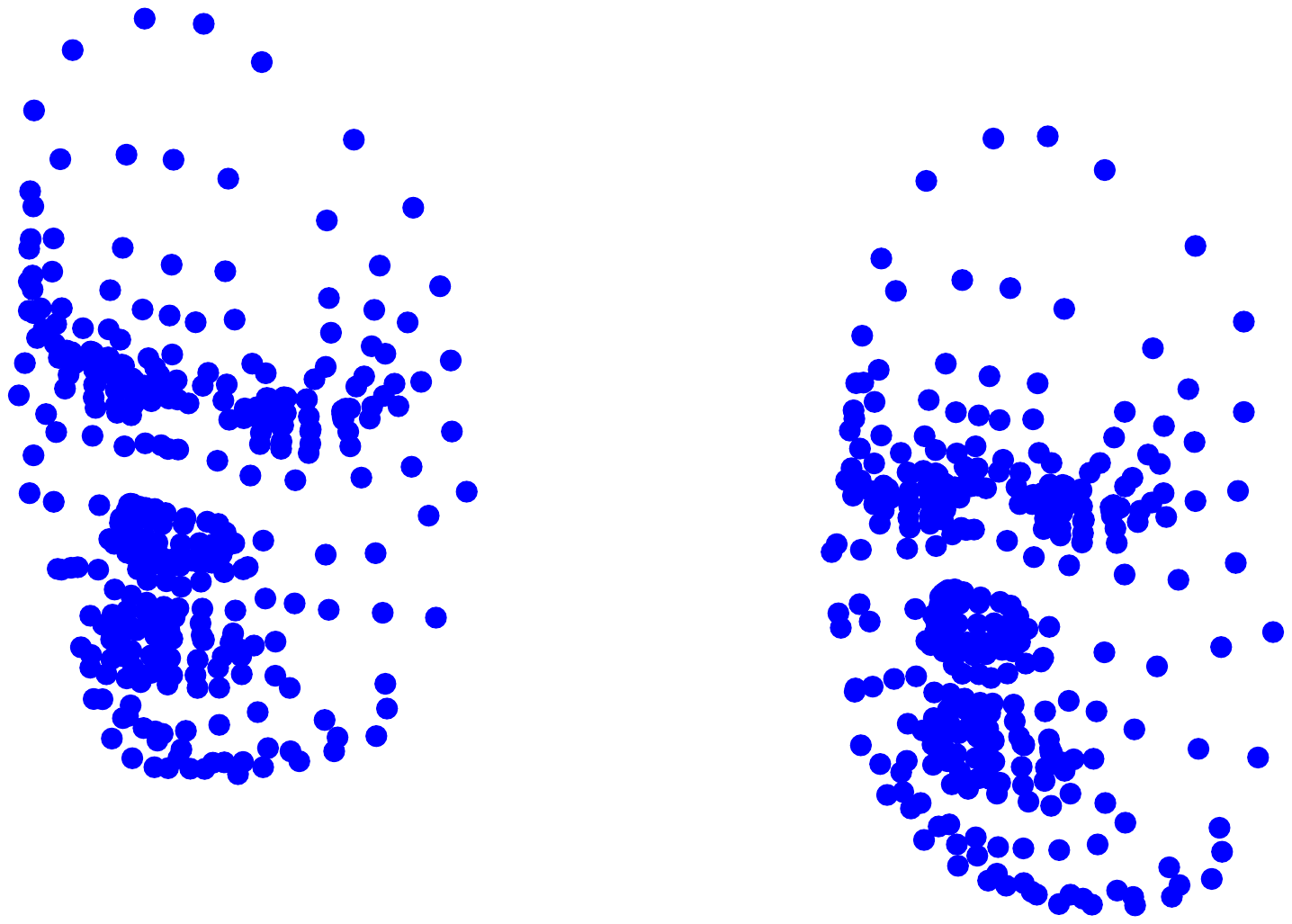}
\includegraphics[height=4cm,width=7cm]{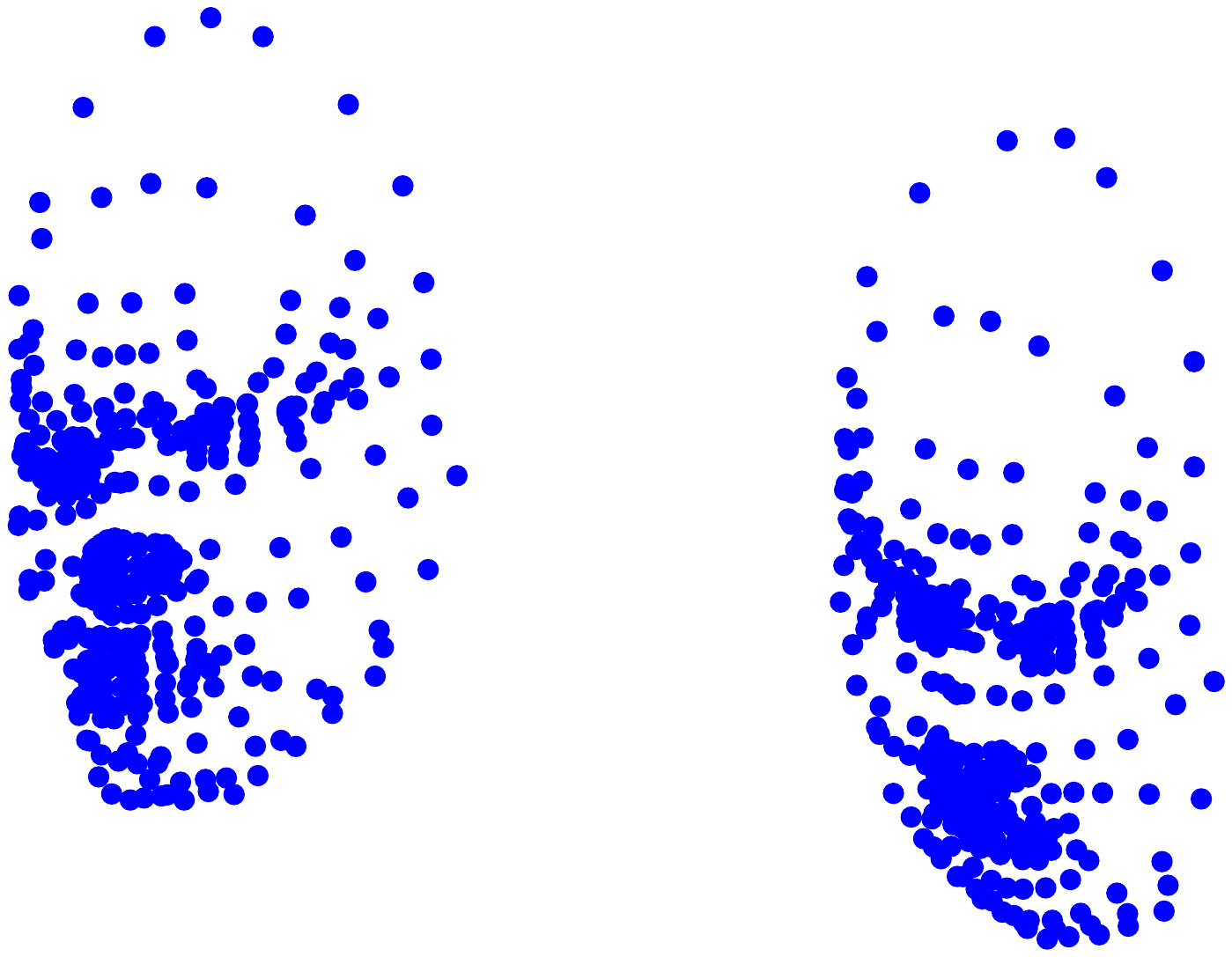}}
\subfigure[PG]{
\includegraphics[height=4cm,width=7cm]{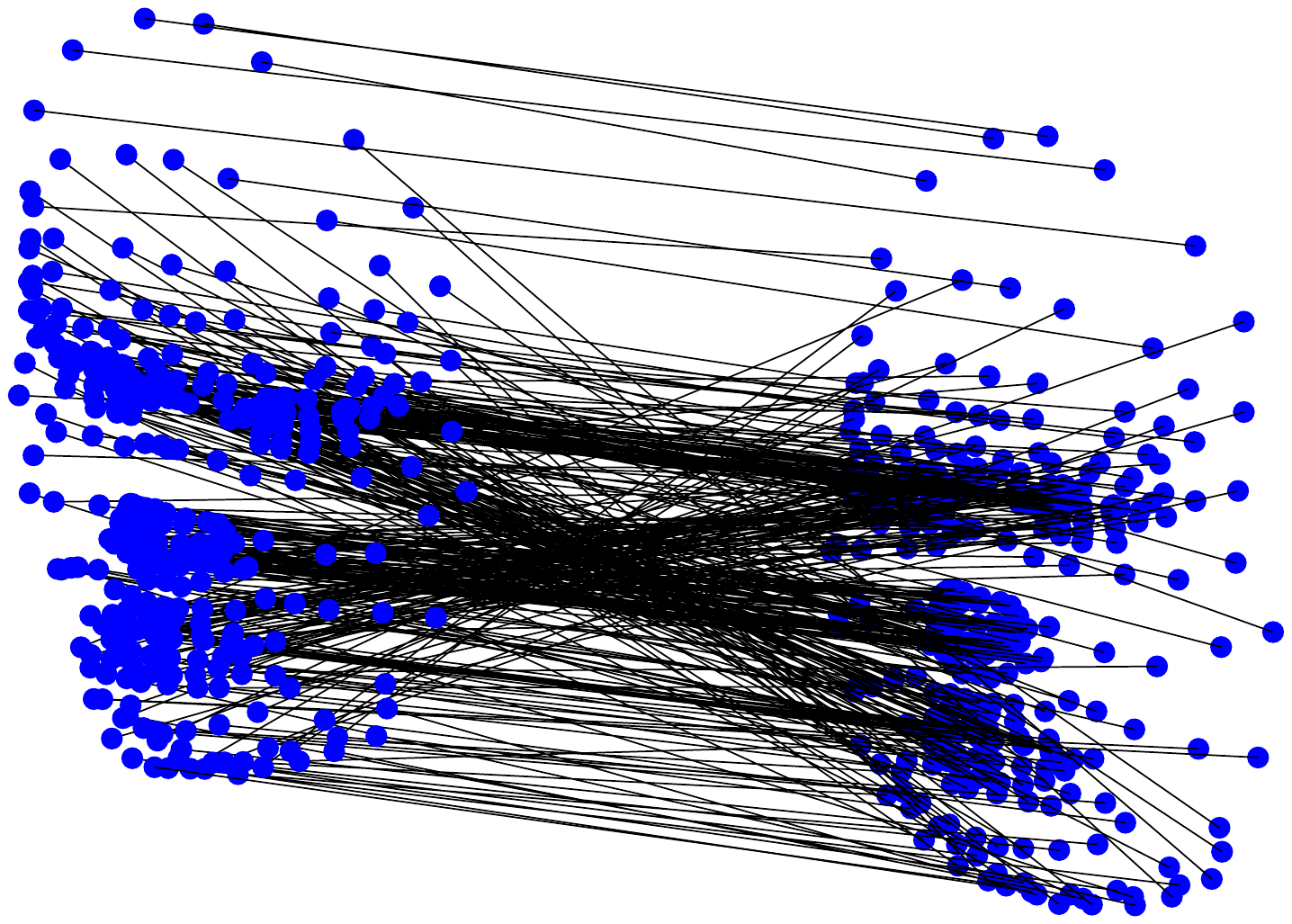}
\includegraphics[height=4cm,width=7cm]{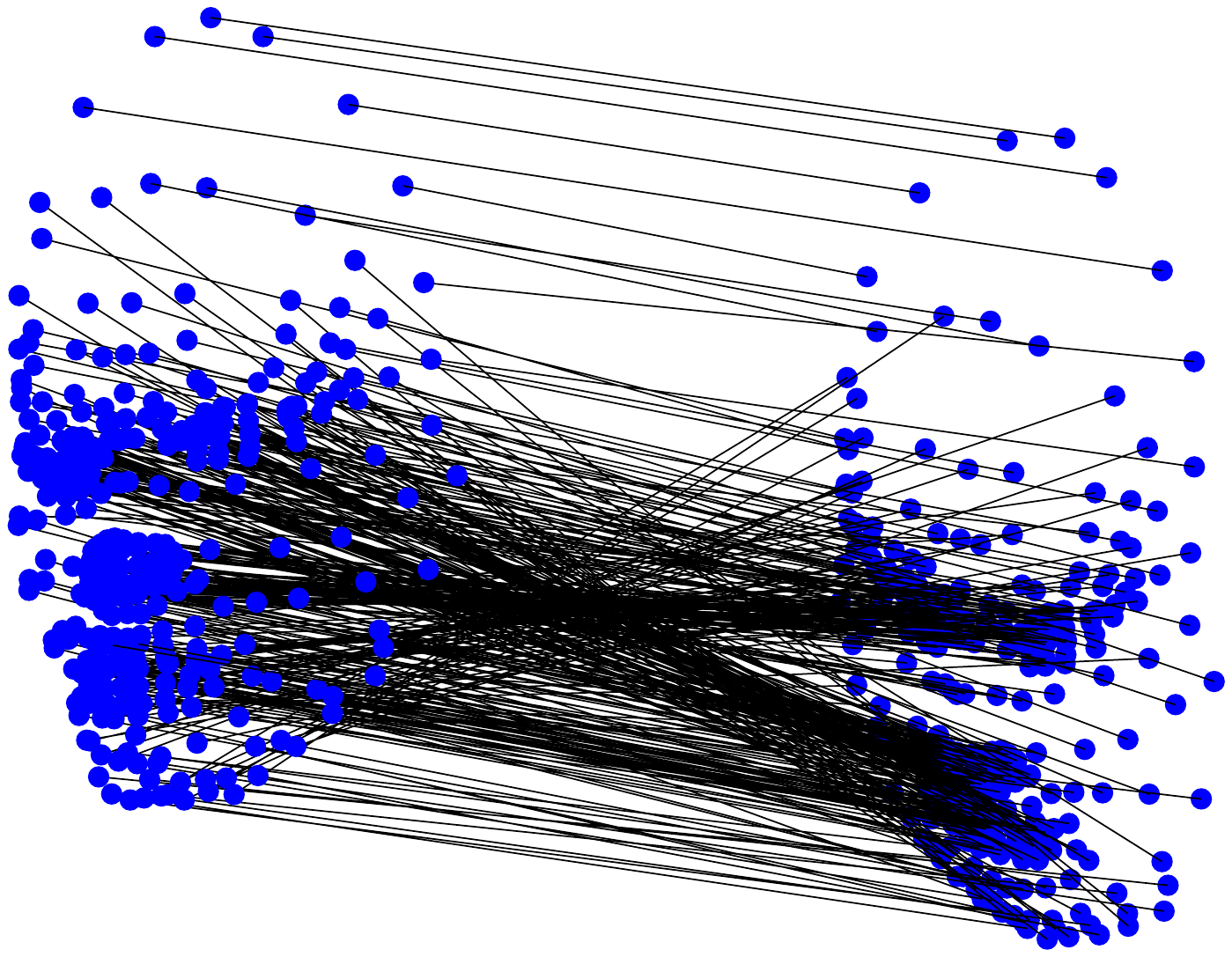}}
\subfigure[FastGA]{
\includegraphics[height=4cm,width=7cm]{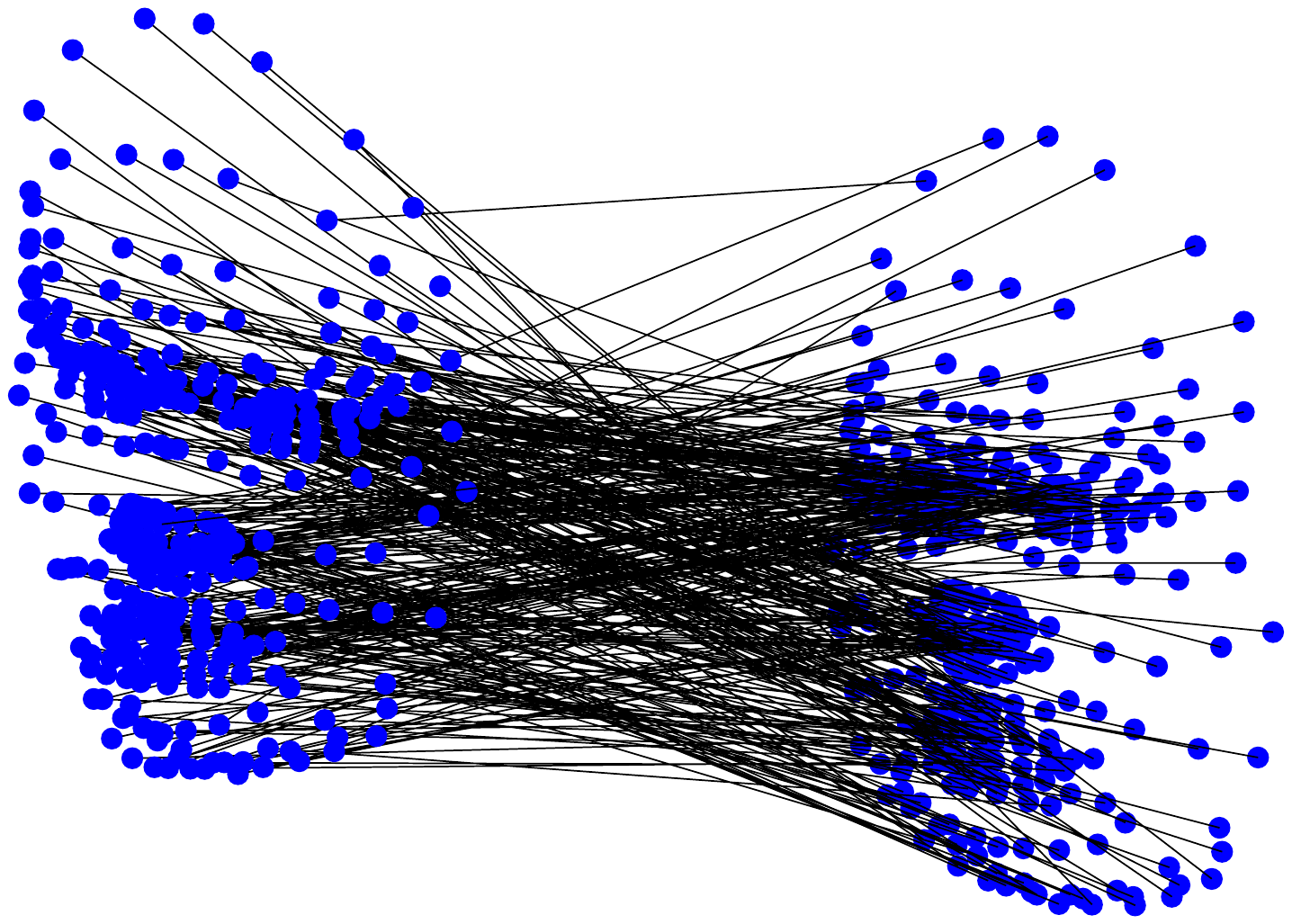}
\includegraphics[height=4cm,width=7cm]{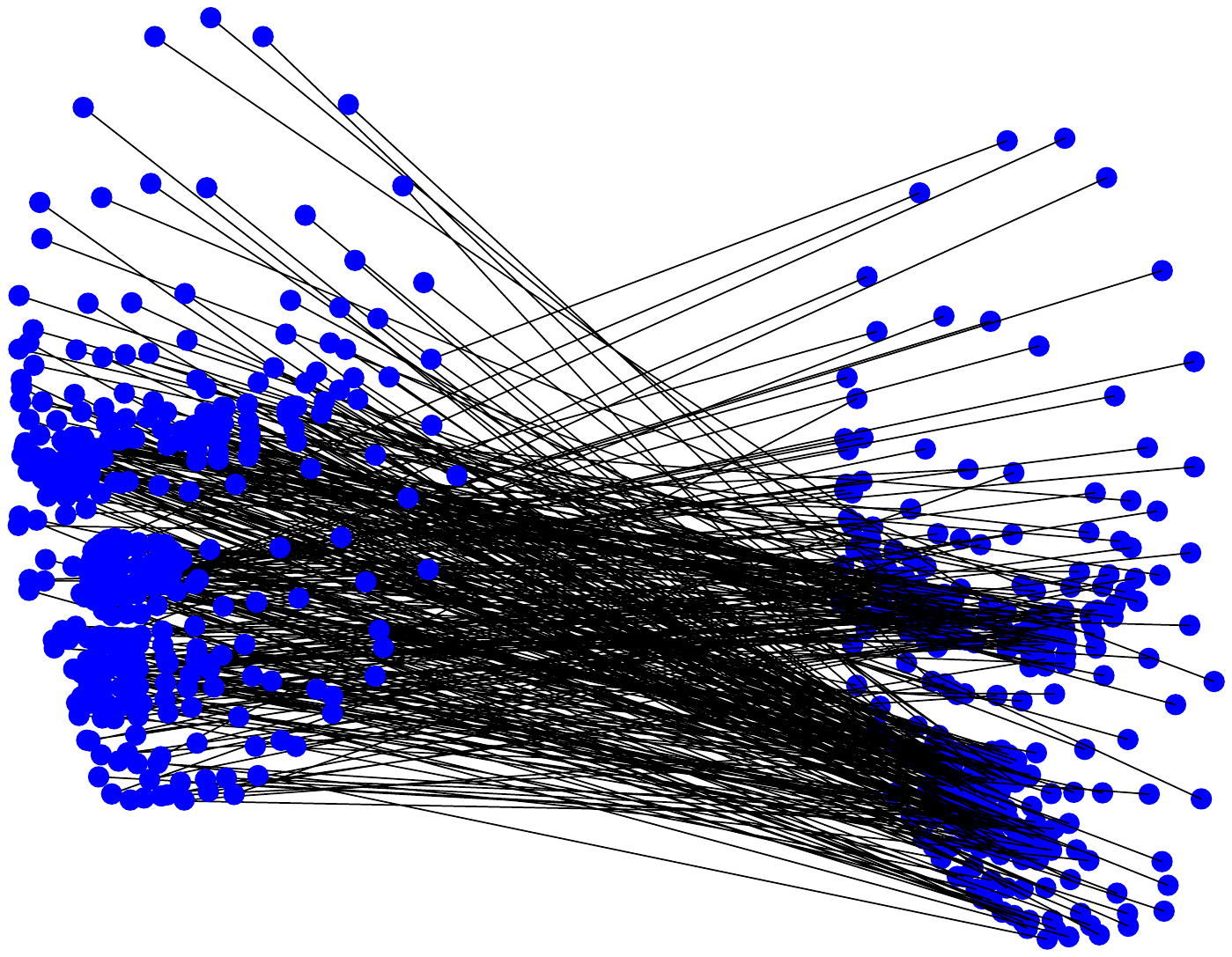}}
\subfigure[Umeyama]{
\includegraphics[height=4cm,width=7cm]{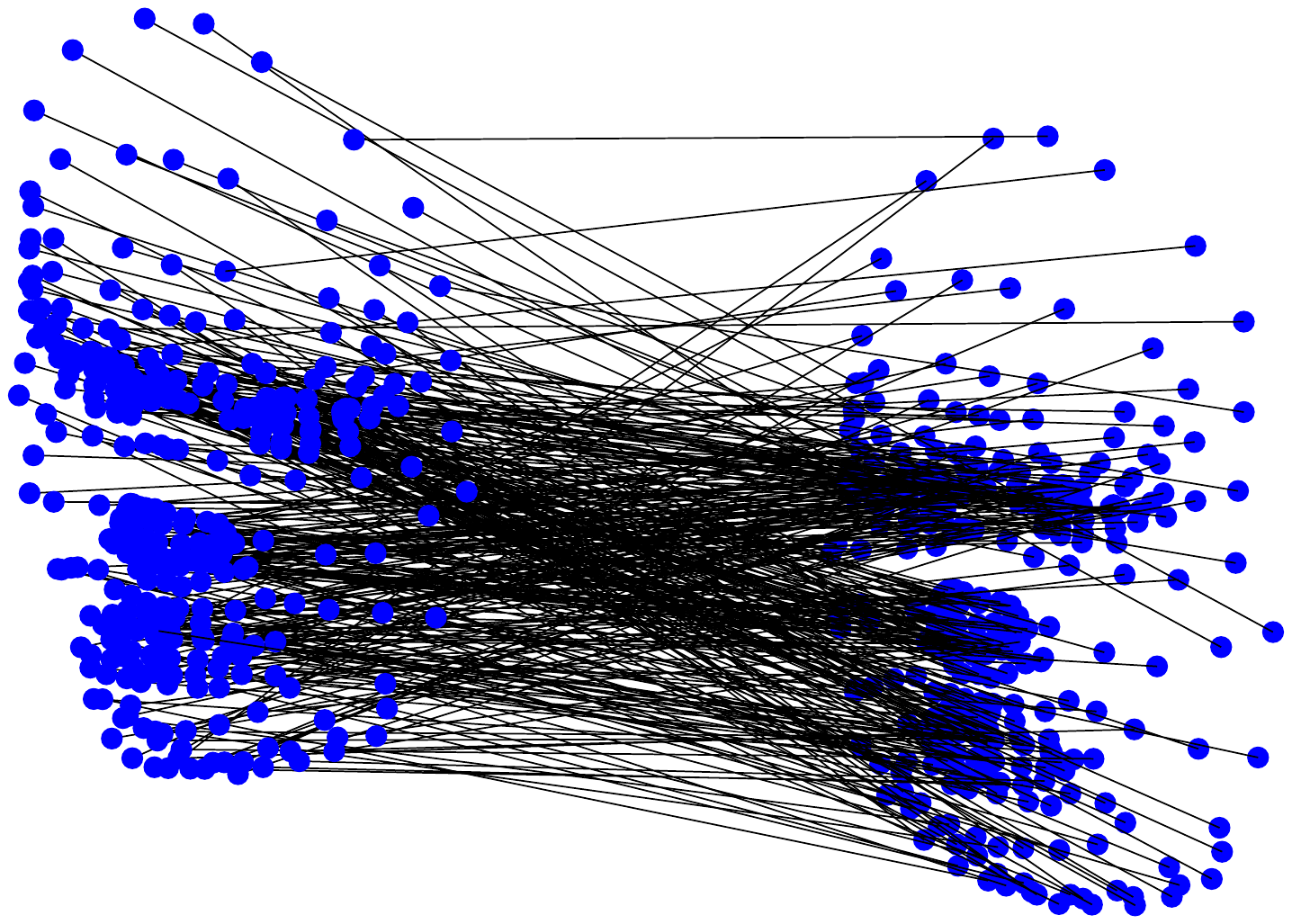}
\includegraphics[height=4cm,width=7cm]{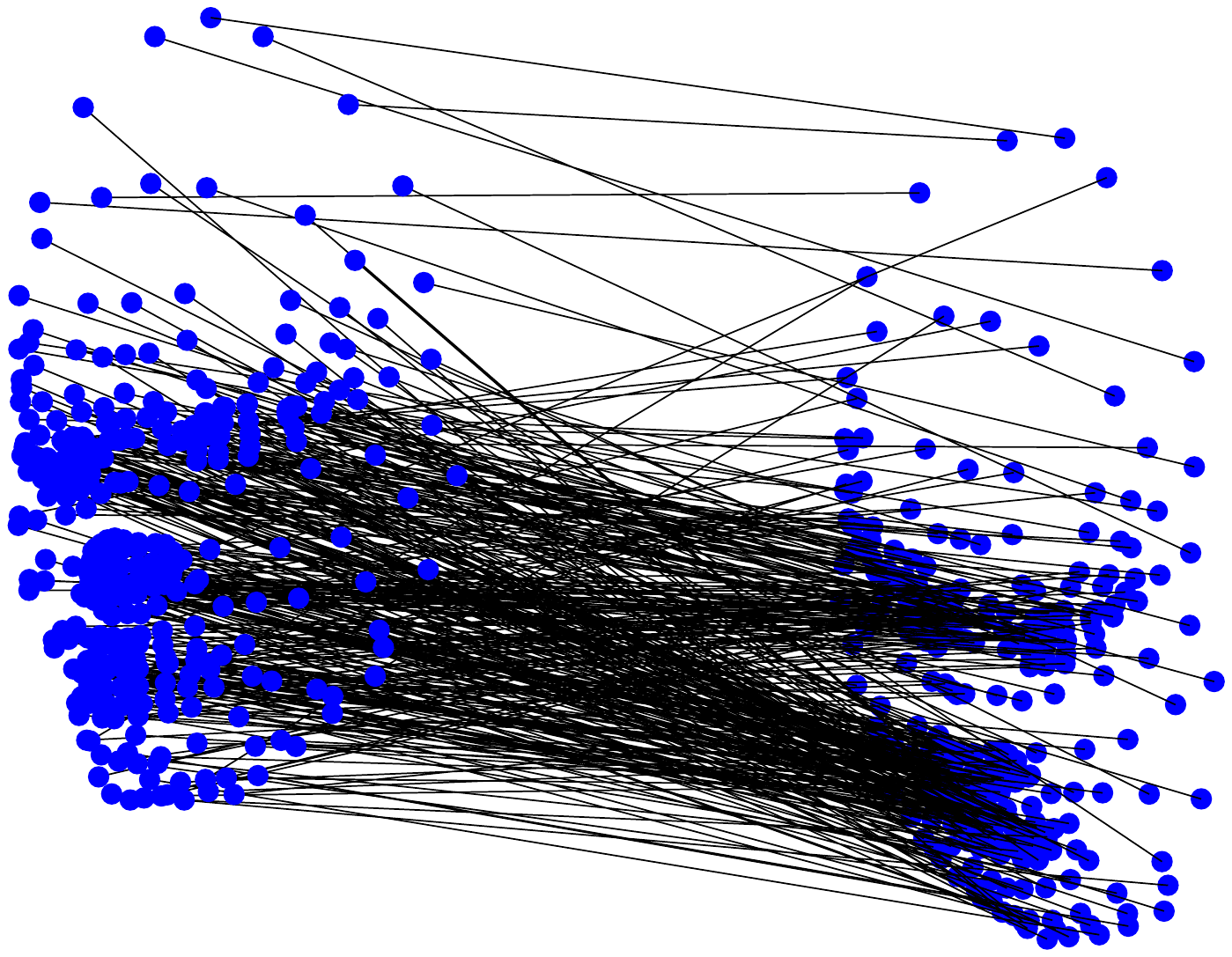}}
\subfigure[FastPFP (Ours)]{
\includegraphics[height=4cm,width=7cm]{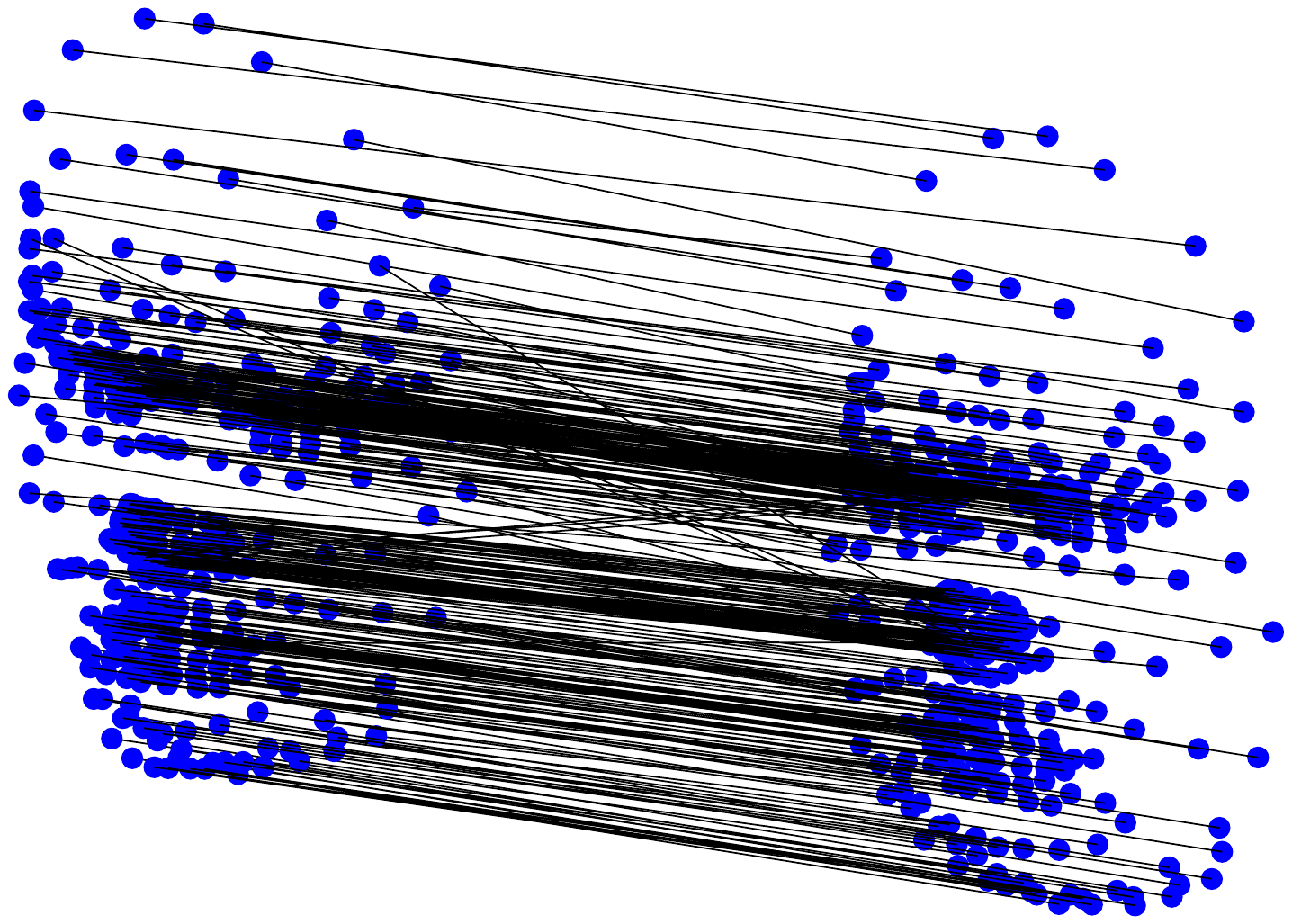}
\includegraphics[height=4cm,width=7cm]{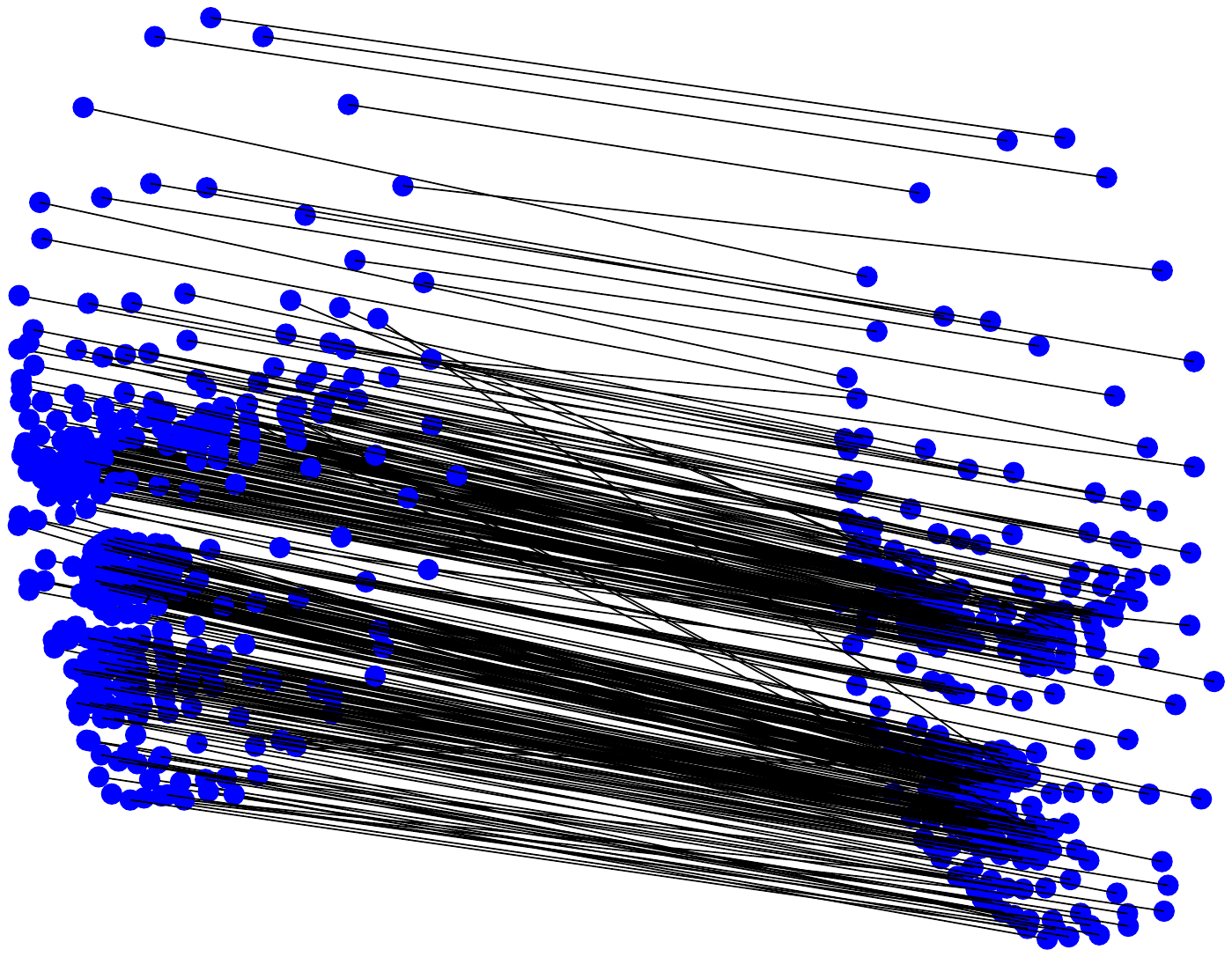}}
\caption{Graph from 3D points matching}\label{face}
\end{figure*}
\begin{figure*}
\centering
\subfigure[Stanford Bunny (1022 nodes vs. 1022 nodes)]{\includegraphics[height=6cm,width=15cm]{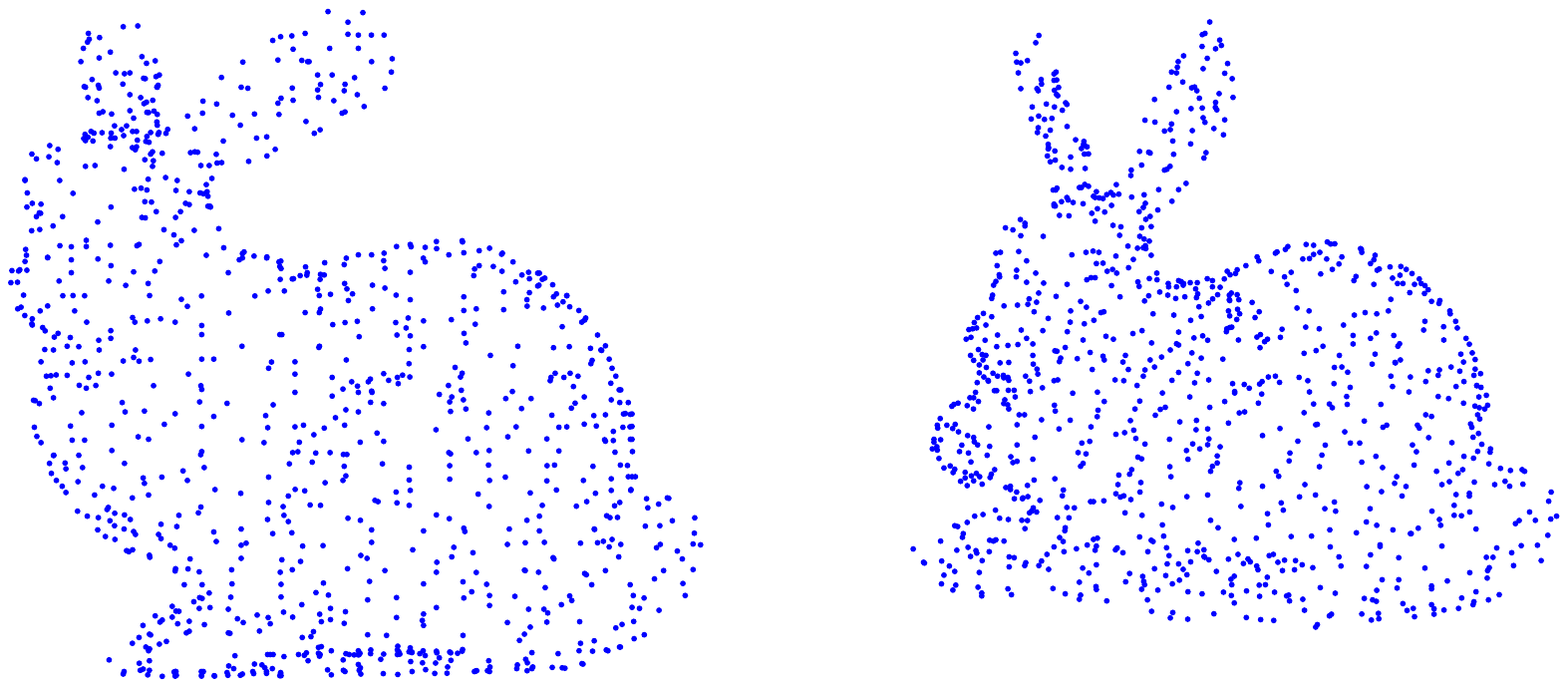}}
\subfigure[PG]{
\includegraphics[height=3cm,width=5cm]{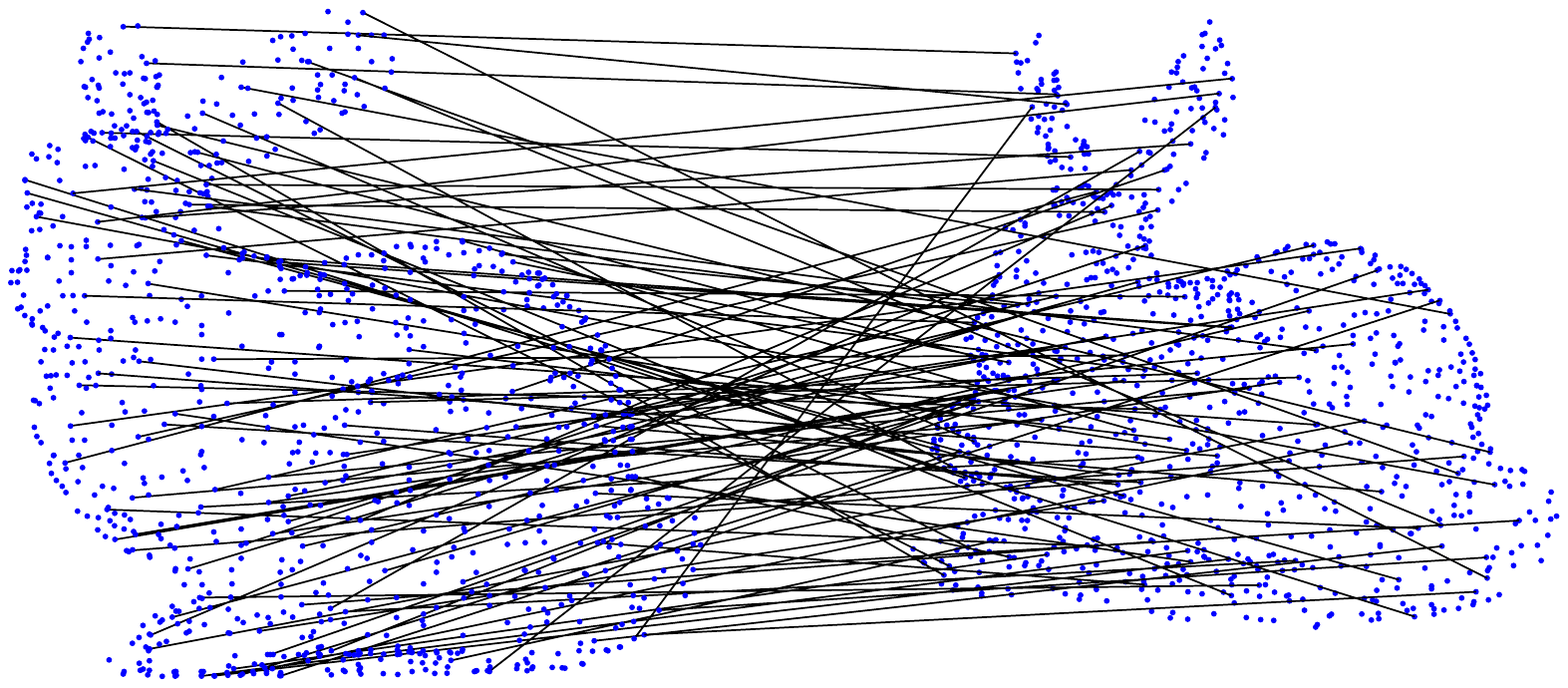}
\includegraphics[height=3cm,width=5cm]{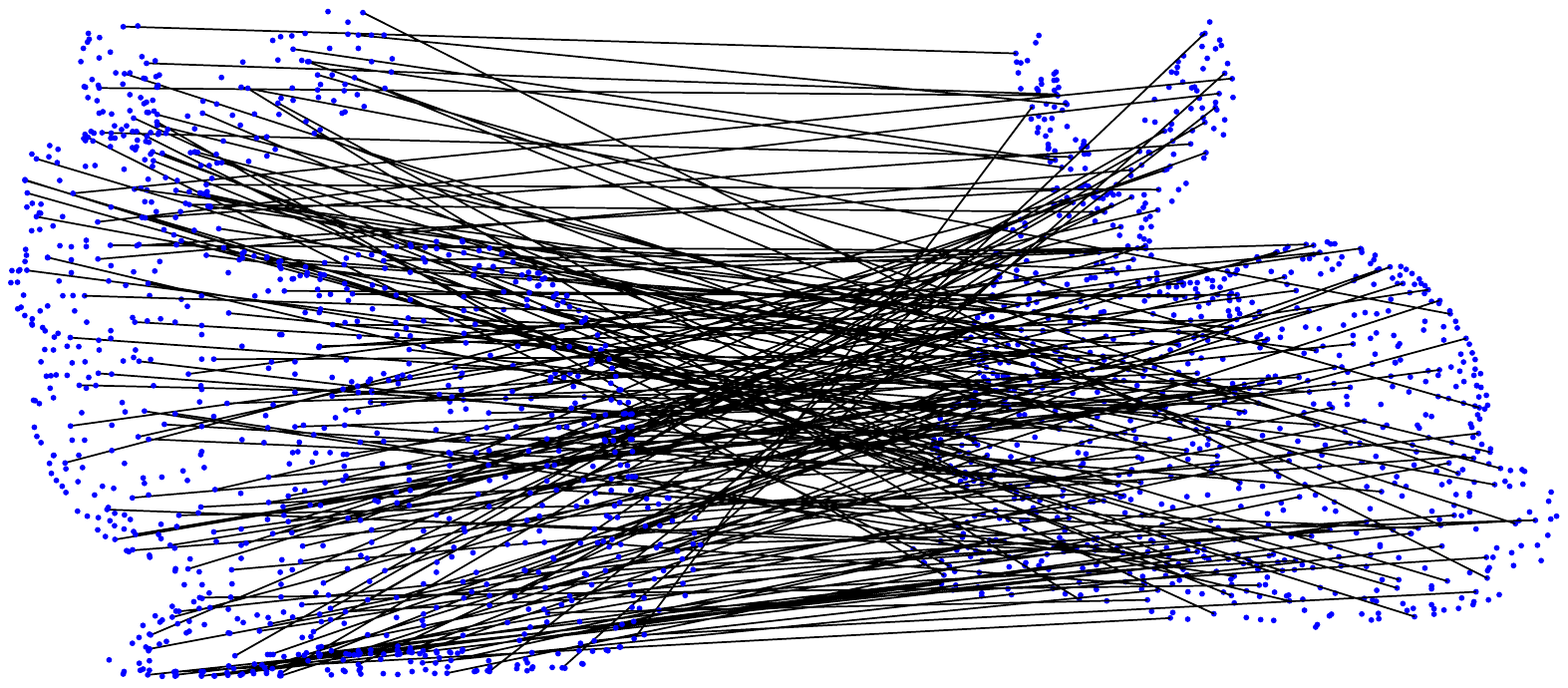}
\includegraphics[height=3cm,width=5cm]{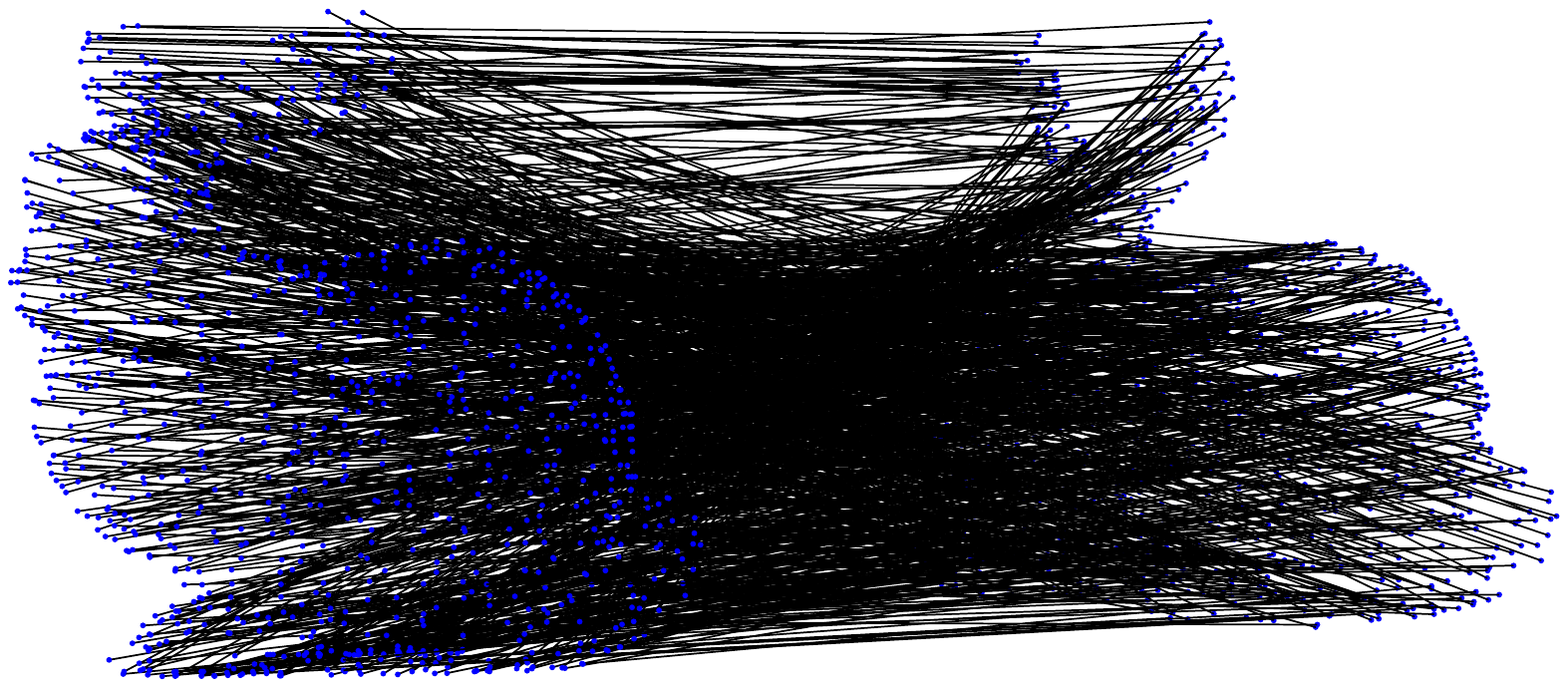}
}
\subfigure[FastGA]{
\includegraphics[height=3cm,width=5cm]{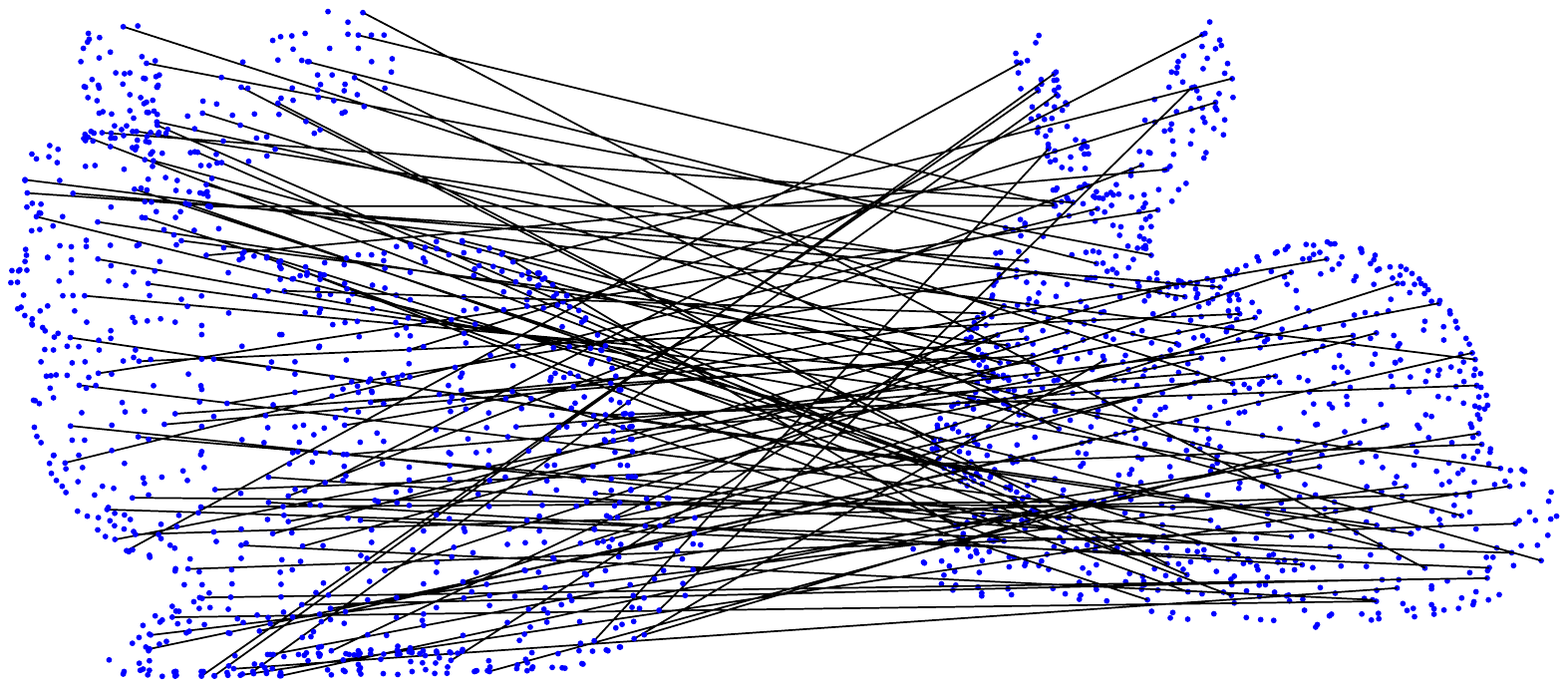}
\includegraphics[height=3cm,width=5cm]{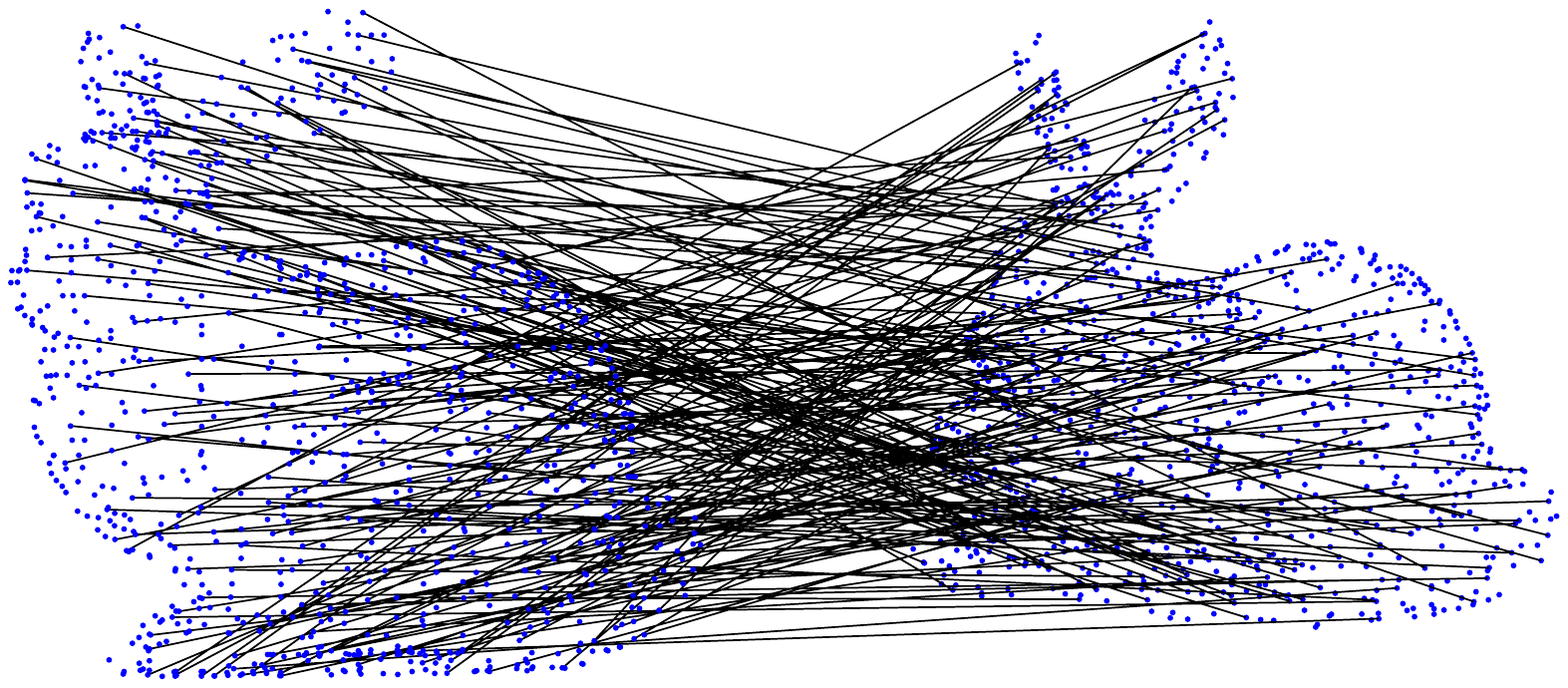}
\includegraphics[height=3cm,width=5cm]{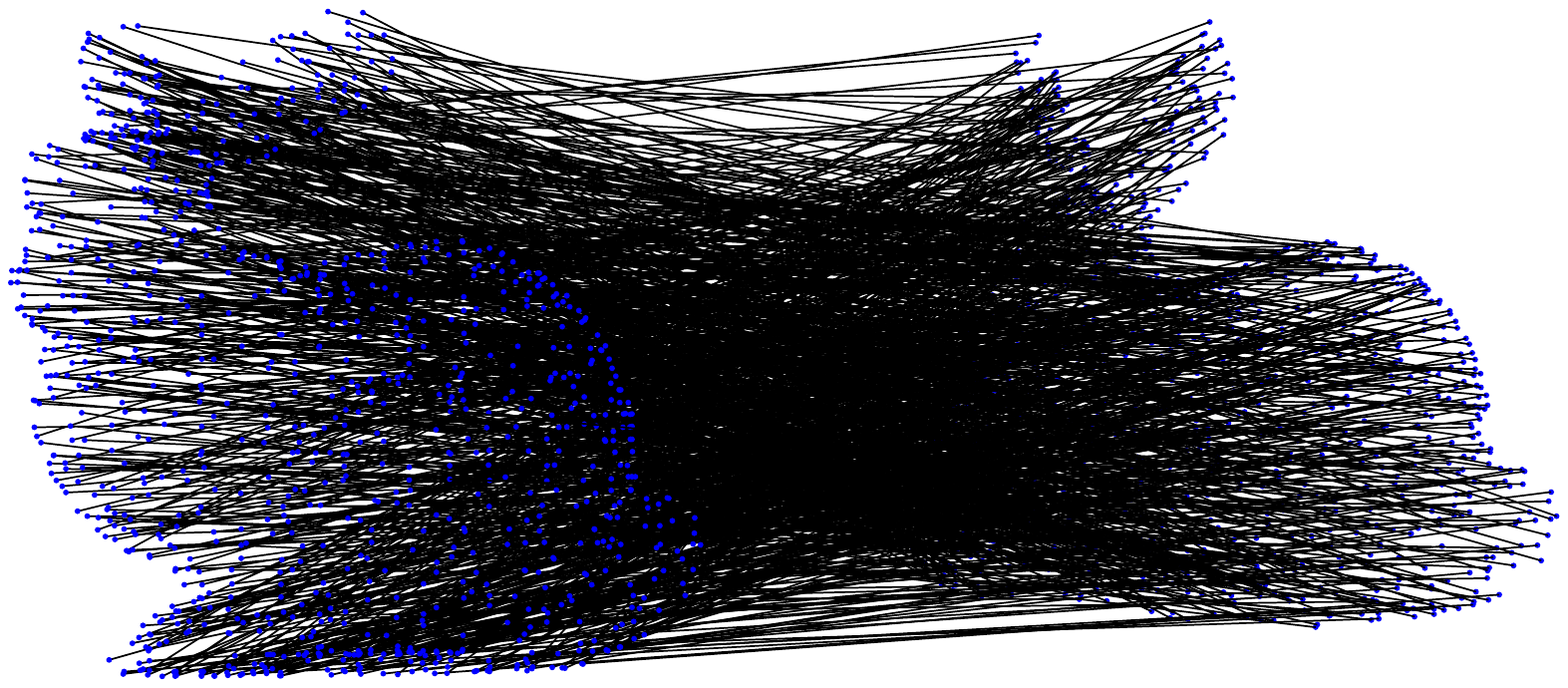}
}
\subfigure[Umeyama]{
\includegraphics[height=3cm,width=5cm]{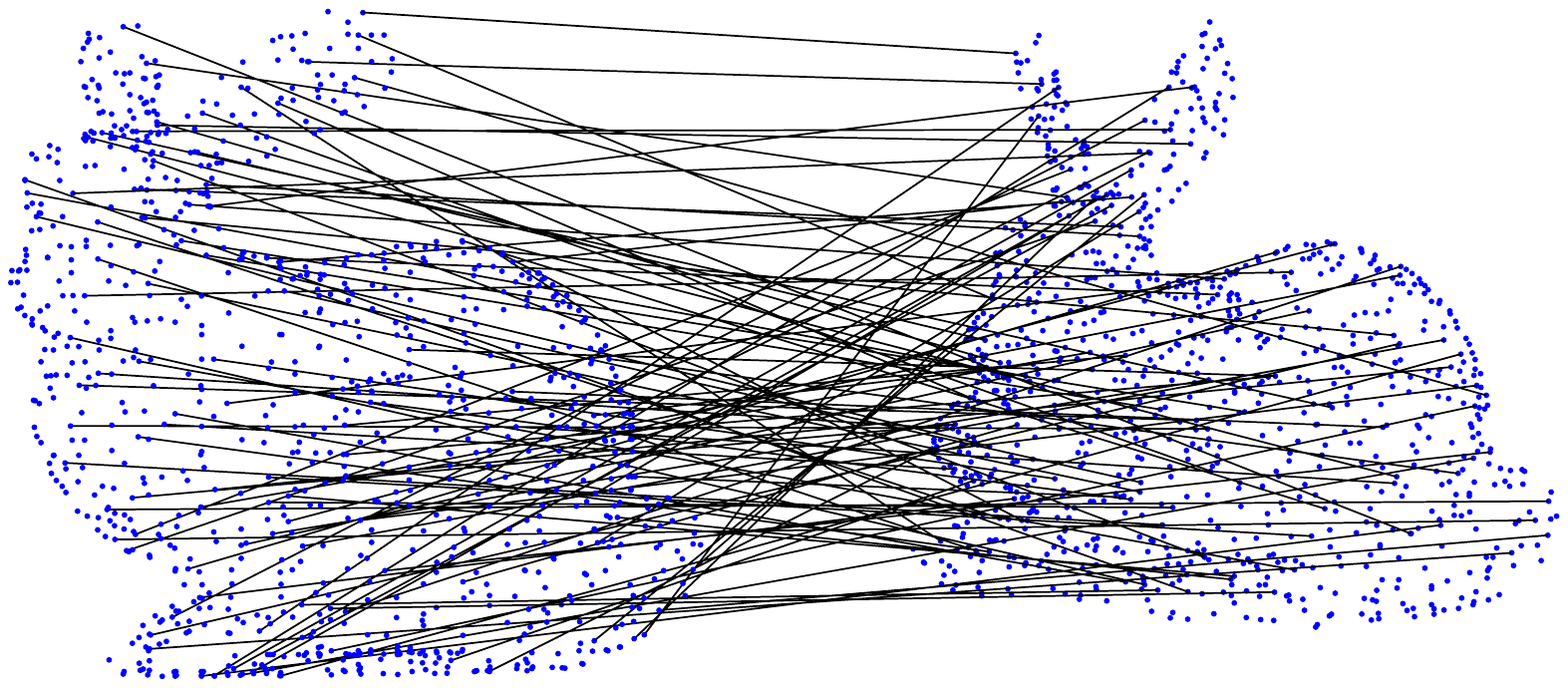}
\includegraphics[height=3cm,width=5cm]{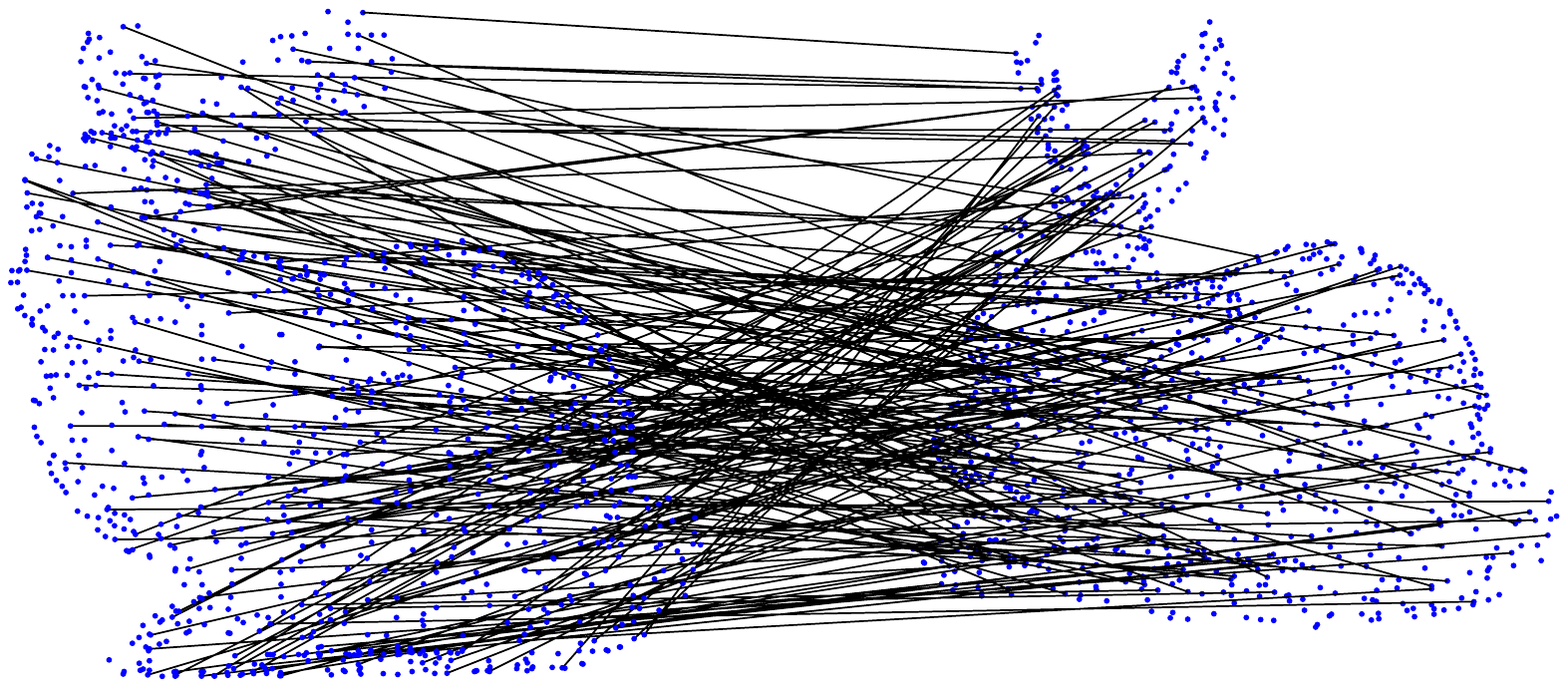}
\includegraphics[height=3cm,width=5cm]{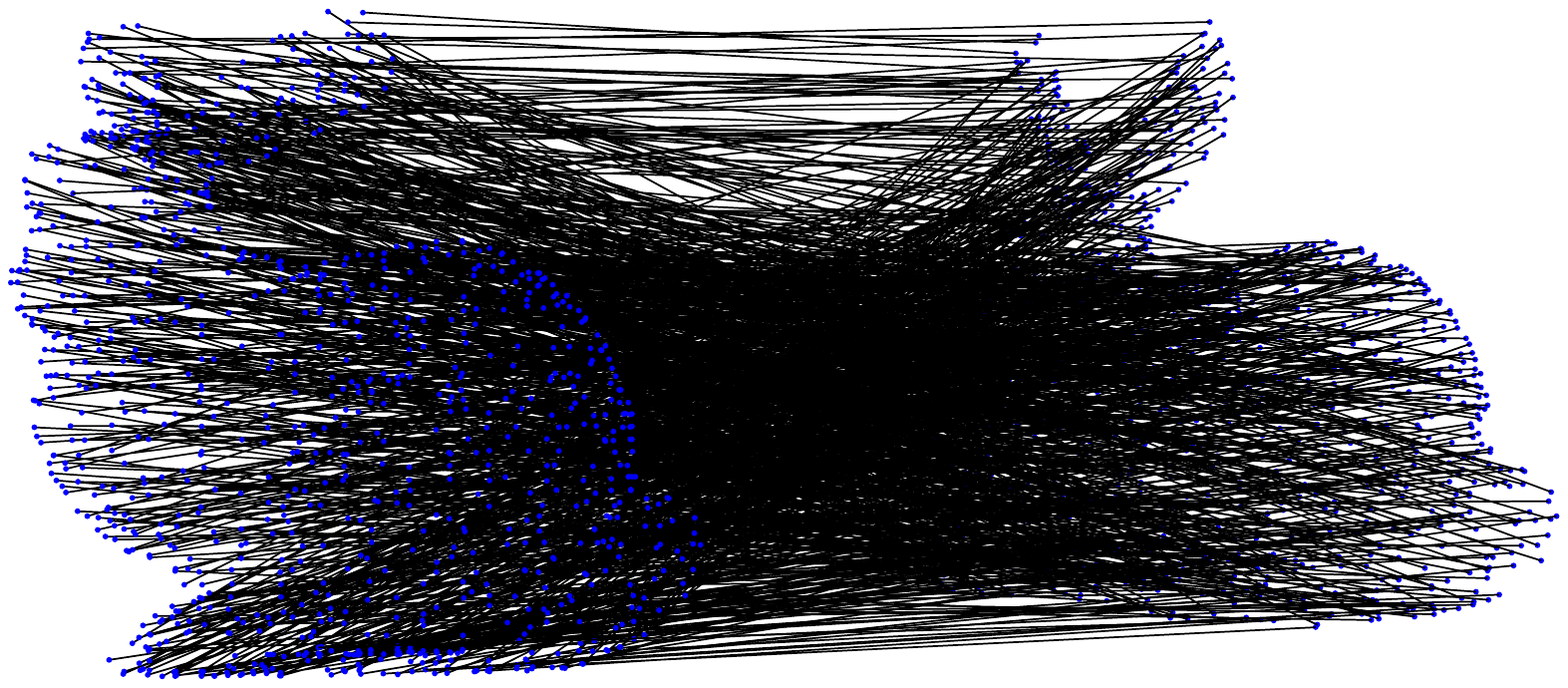}
}
\subfigure[FastPFP (Ours)]{
\includegraphics[height=3cm,width=5cm]{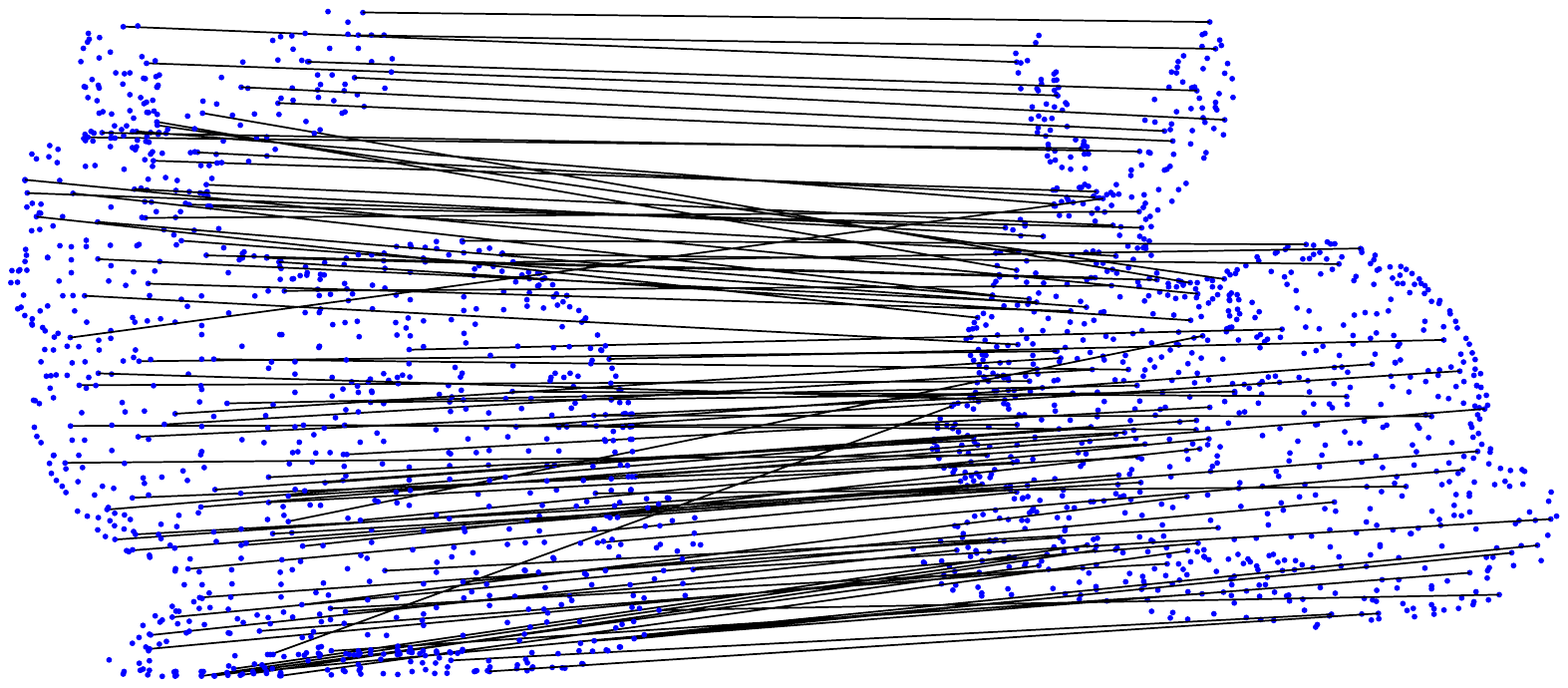}
\includegraphics[height=3cm,width=5cm]{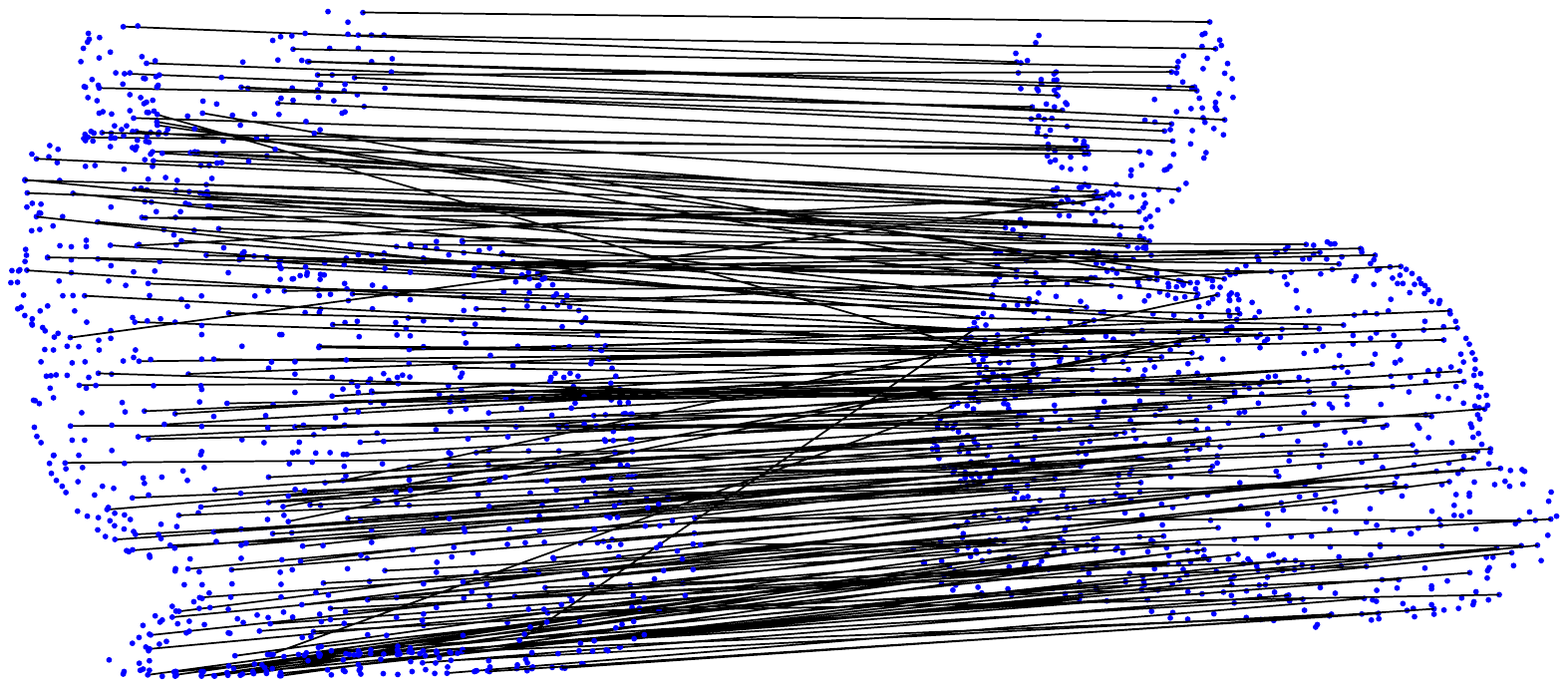}
\includegraphics[height=3cm,width=5cm]{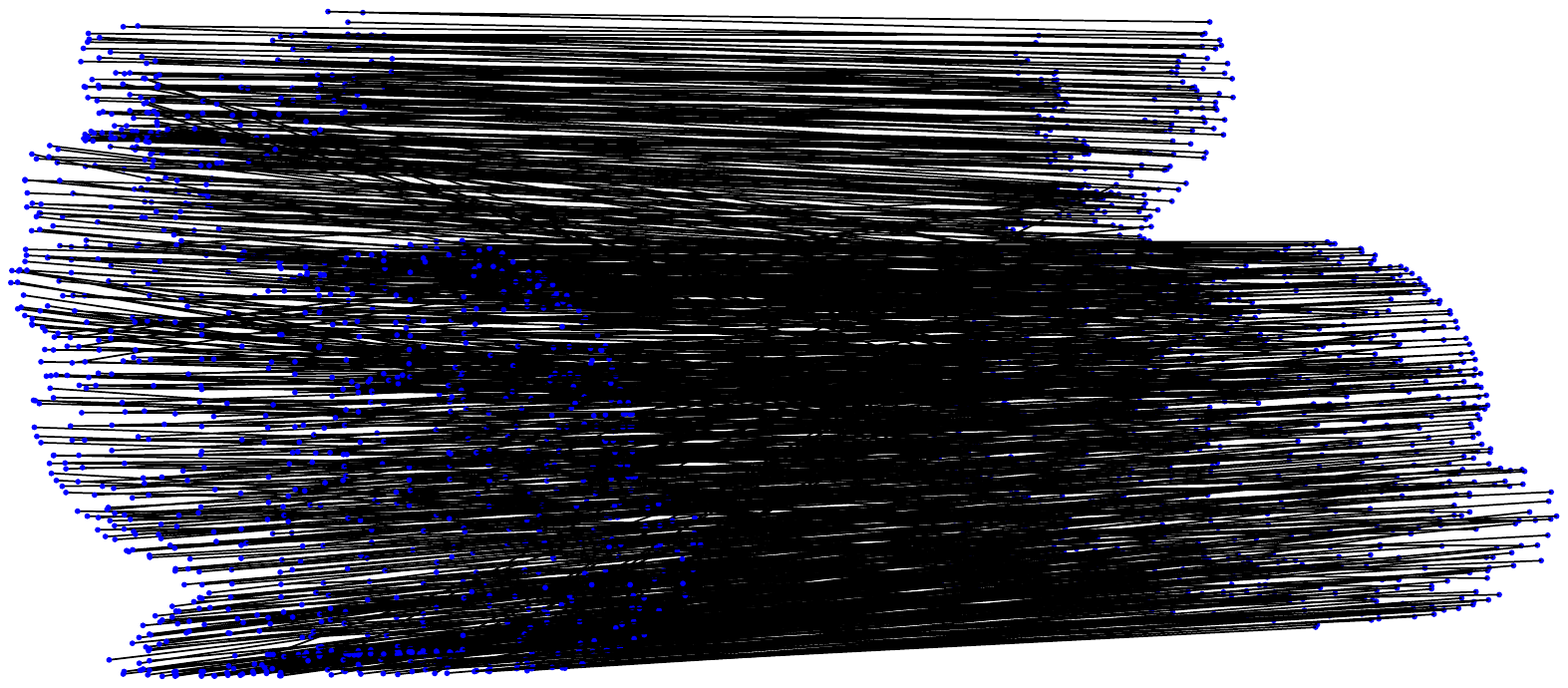}
}
\caption{Graph from 3D points matching: Stanford Bunny. For (b)-(e), left column: 10\% matching displayed. Middle column: 20\%
matching displayed. Right column: 100\% matching displayed.}\label{bunny}
\end{figure*}
In this set of experments, weighted graphs were constructed from 3D points. The Face 50 and Stanford Bunny 3D points datasets\footnote{Both datasets can be downloaded in \url{https://sites.google.com/site/myronenko/}} were used. In the Face 50 dataset, each face has 392 points, all of which were used for the construction of graphs. In the Stanford Bunny 3D data, each set of 3D points was downsampled to 1022 points. Graphs, whose edge represents Euclidean distance between two points, were constructed. The visualization of matching results of PG, FastGA, FastPFP and Umeyama's method are shown in Fig. \ref{face} and \ref{bunny}. The matching error (measured as $\| A-XA'X^T \|_F^2$) and runtime are shown in Table \ref{table_runtime} and \ref{table_error}. Again, FastPFP outperformed the others in both speed and accuracy. FastPFP is about 15 $\sim$ 20 times faster than FastGA.

\begin{table}
\begin{center}
\caption{Graphs from 3D points: runtime (sec)}\label{table_runtime}
\begin{tabular}{c|c|c|c|c}
\hline
Graph pair & PG & FastGA & Umeyama & FastPFP\\
\hline
Face pair 1 & 2.90 & 15.05 & 18.40 & $\mathbf{0.69}$\\
Face pair 2 & 3.06 & 15.53 & 18.23 & $\mathbf{0.69}$\\
Stanford Bunny & 21.74 & 118.55 & 1027.02 & $\mathbf{8.75}$\\
\hline
\end{tabular}
\caption{Graphs from 3D points: matching error ($10^5$)}\label{table_error}
\begin{tabular}{c|c|c|c|c}
\hline
Graph pair & PG & FastGA & Umeyama & FastPFP \\
\hline
Face pair 1 & 6.30 & 14.3 & 12.5 & $\mathbf{0.78}$\\
Face pair 2 & 6.23 & 16.2 & 10.6 & $\mathbf{1.64}$\\
Stanford Bunny & 29.1 & 37.4 & 32.7 & $\mathbf{9.55}$ \\
\hline
\end{tabular}
\end{center}
\end{table}

\section{Discussion}
\subsection{Comparison To FastGA}
Although FastGA and FastPFP both have time complexity $O(n^3)$ per
iteration, FastPFP is empirically about $3\sim20$ times faster than FastGA. This
is mainly due to the slow convergence of FastGA. Despite the theoretical work
on convergence of FastGA~\cite{GAcon}, empirically FastGA does not have good
convergence speed, as observed in the experiments. To see that, we show the
magnitude of parameter $\epsilon_1=\max(|X^{(t+1)}-X^{(t)}|)$ for
image sequence (image 0 vs. image 110) and real image (Graffiti) for
20 iterations in Fig. \ref{convergence}. Besides good convergence,
FastPFP only requires simple arithmetic operations (addition,
multiplication and maximum element search) while FastGA requires
additionally exponentiation and logarithm (for numerically stable
implementation).
\begin{figure}
\centering \subfigure[image sequence
sequence]{\includegraphics[height=3.5cm,width=4cm]{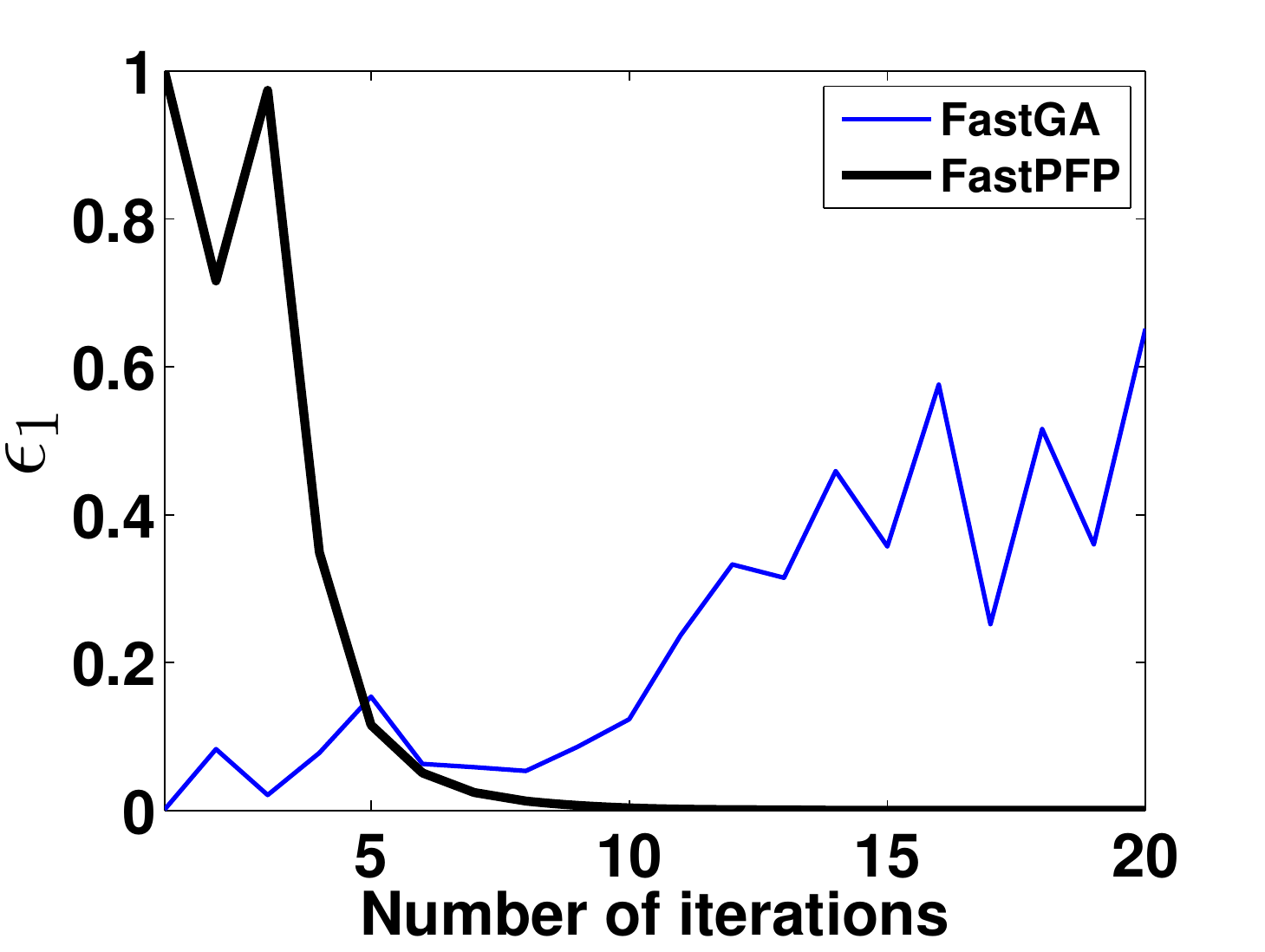}}
\subfigure[Real
image]{\includegraphics[height=3.5cm,width=4cm]{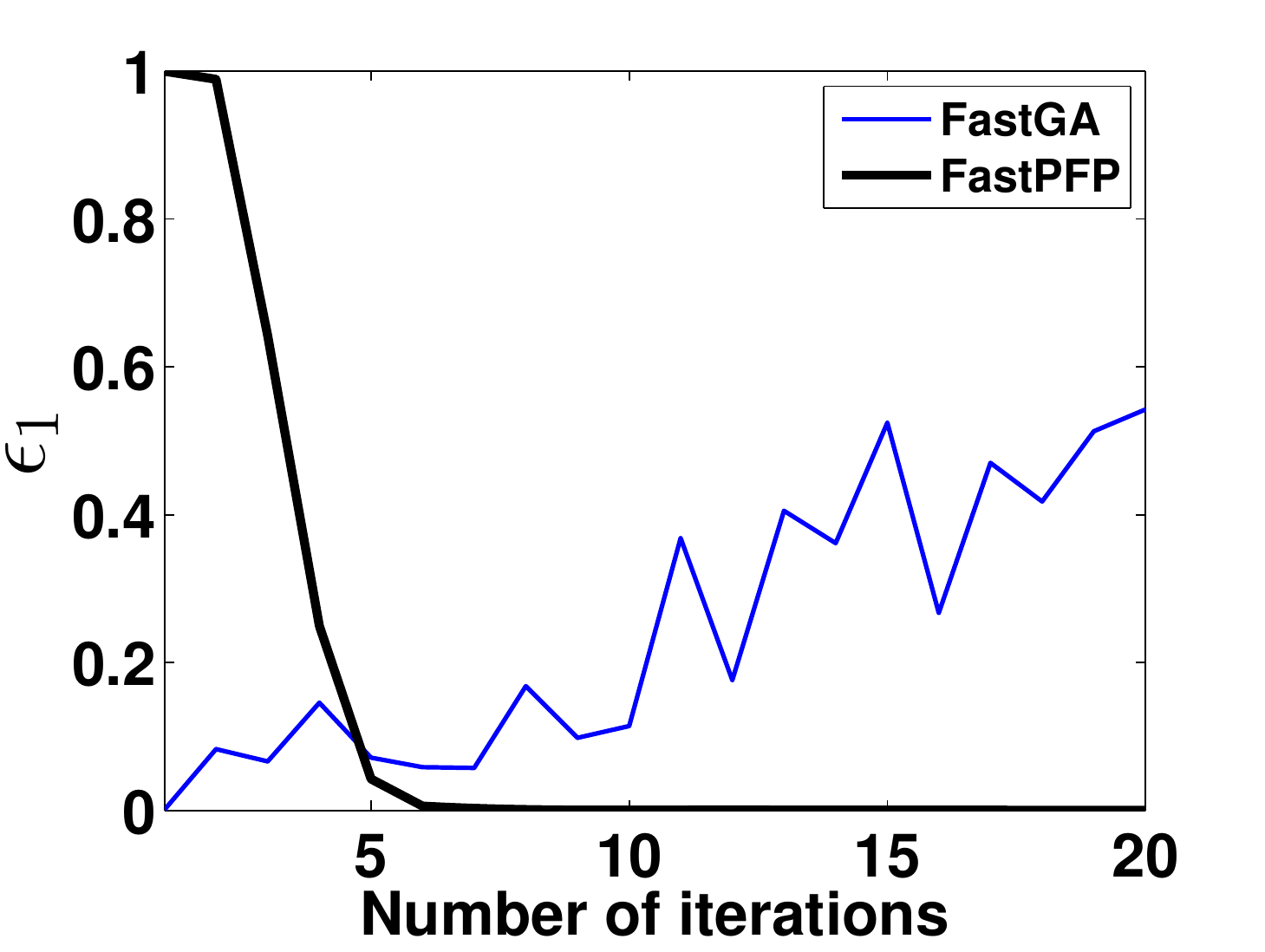}}
\caption{Convergence of FastPFP and FastGA. The vertical axis represents the value of $\epsilon_1=\max(|X^{(t+1)}-X^{(t)}|)$.}\label{convergence}
\end{figure}

\subsection{Parameter Sensitivity}\label{sec:para}
\begin{figure}
\centering
\subfigure[Image sequence]{\includegraphics[height=3.5cm,width=9cm]{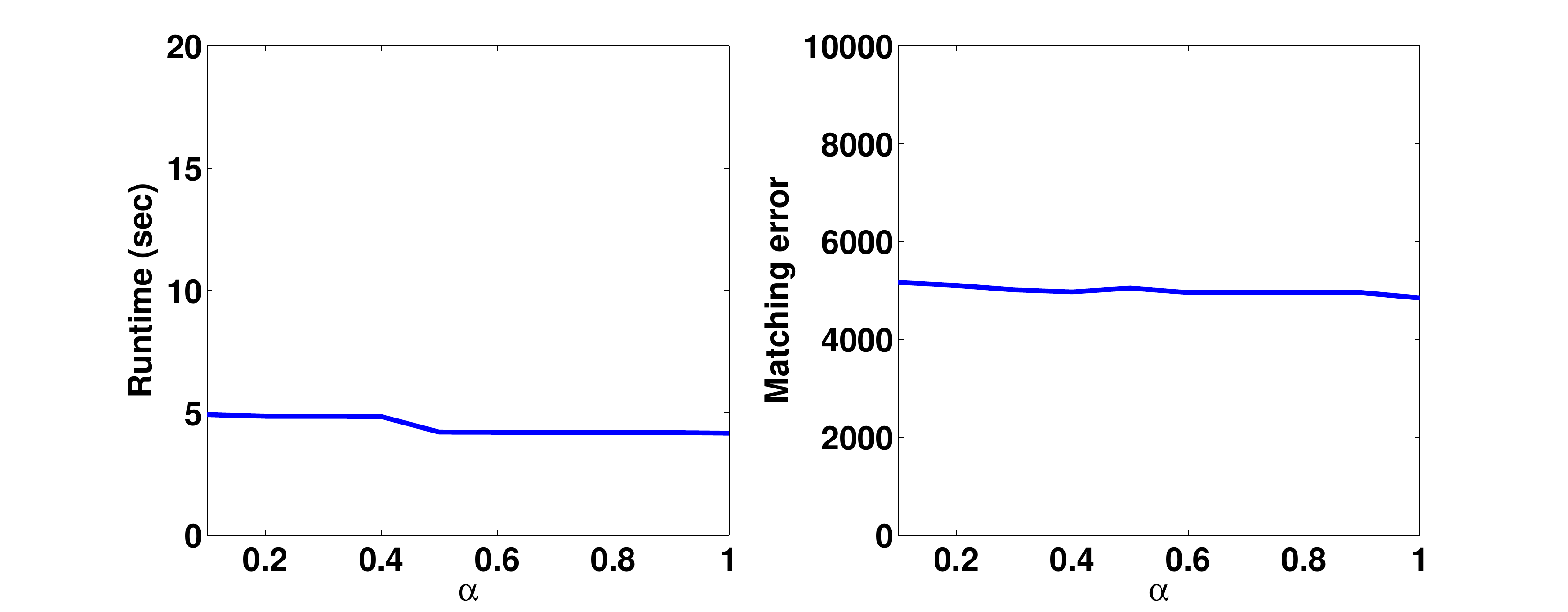}}
\subfigure[Real image]{\includegraphics[height=3.5cm,width=9cm]{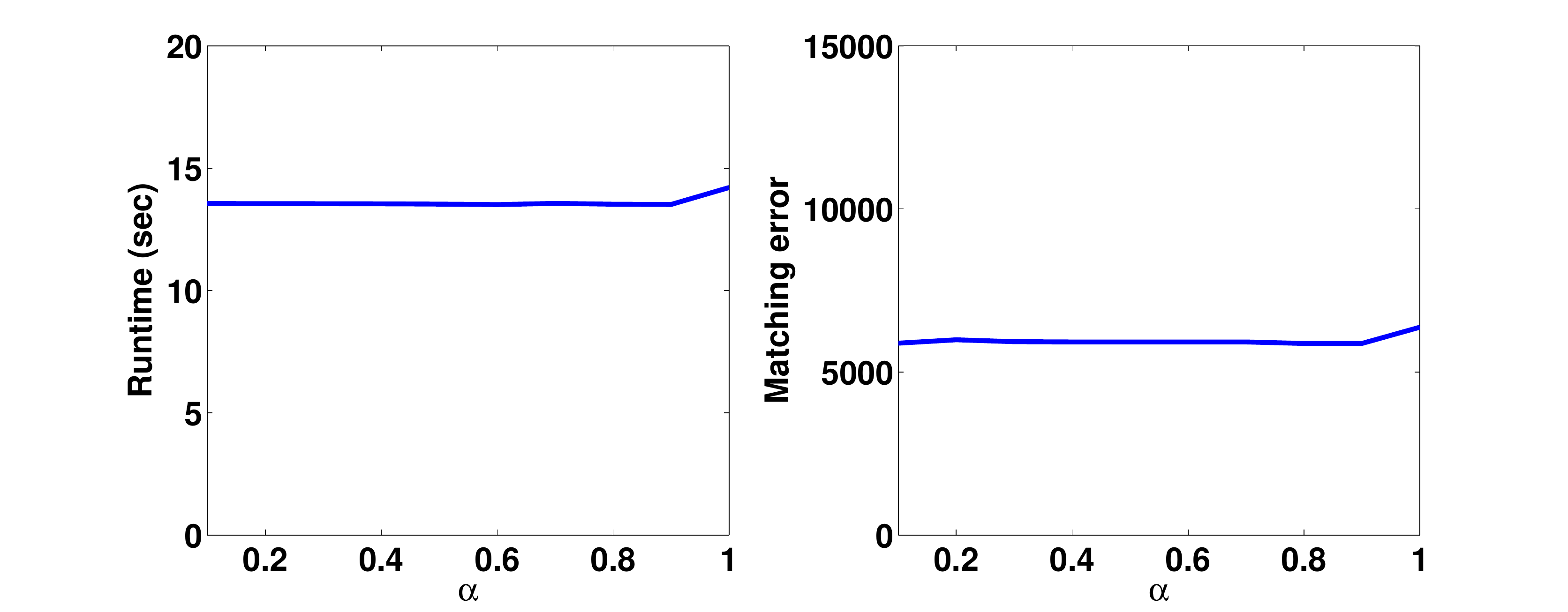}}
\caption{Performance for different values of $\alpha$}\label{parameter1}
\end{figure}

\begin{figure}
\subfigure[$\lambda=\infty$]{\includegraphics[height=3.5cm,width=9cm]{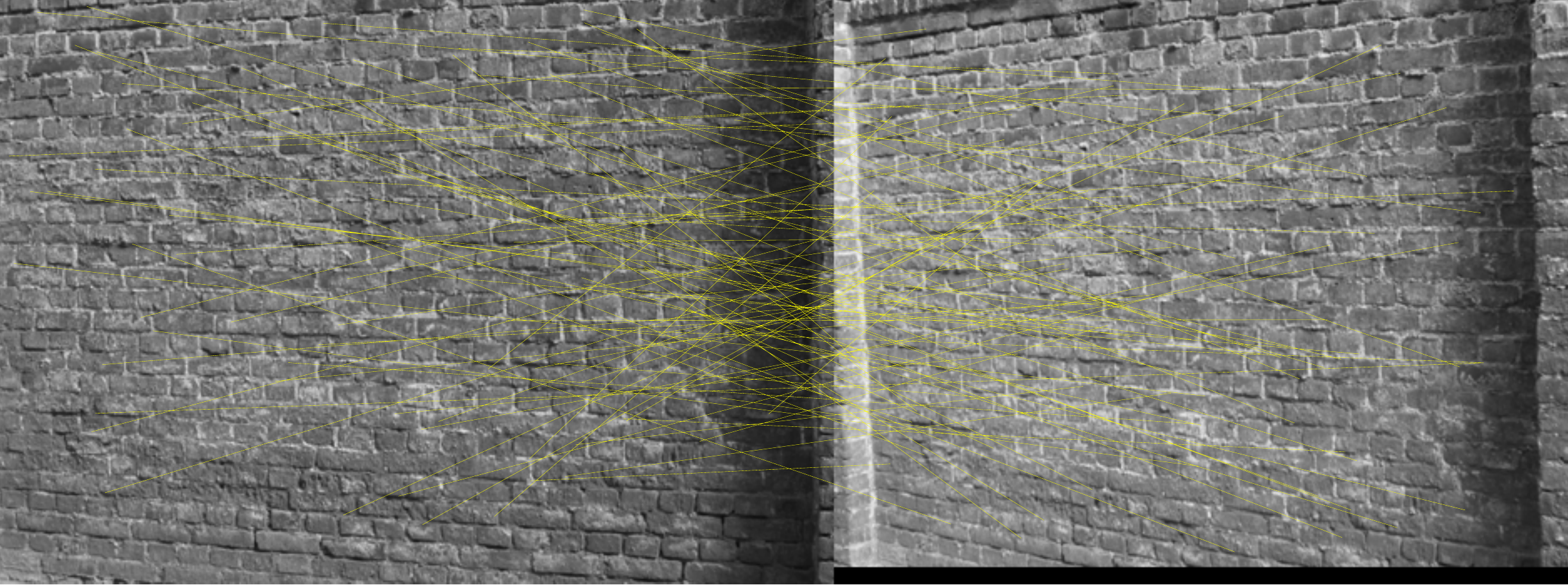}}
\subfigure[$\lambda=0$]{\includegraphics[height=3.5cm,width=9cm]{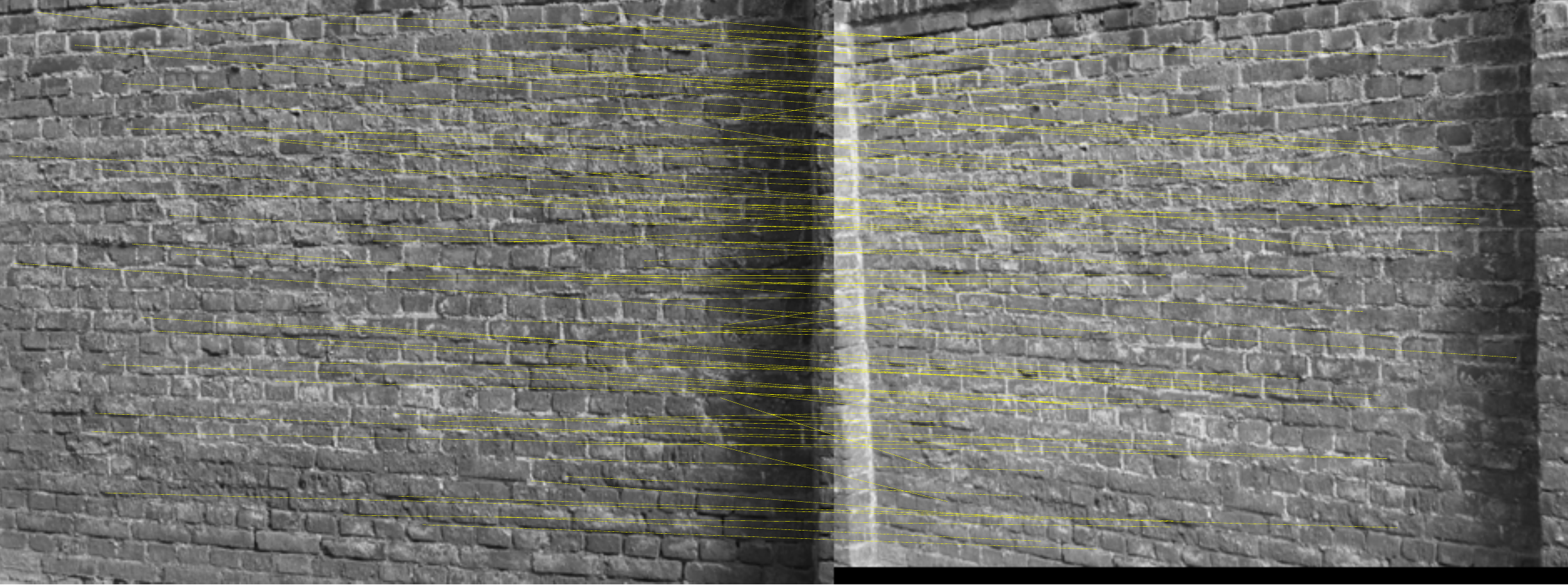}}
\subfigure[$\lambda=5$]{\includegraphics[height=3.5cm,width=9cm]{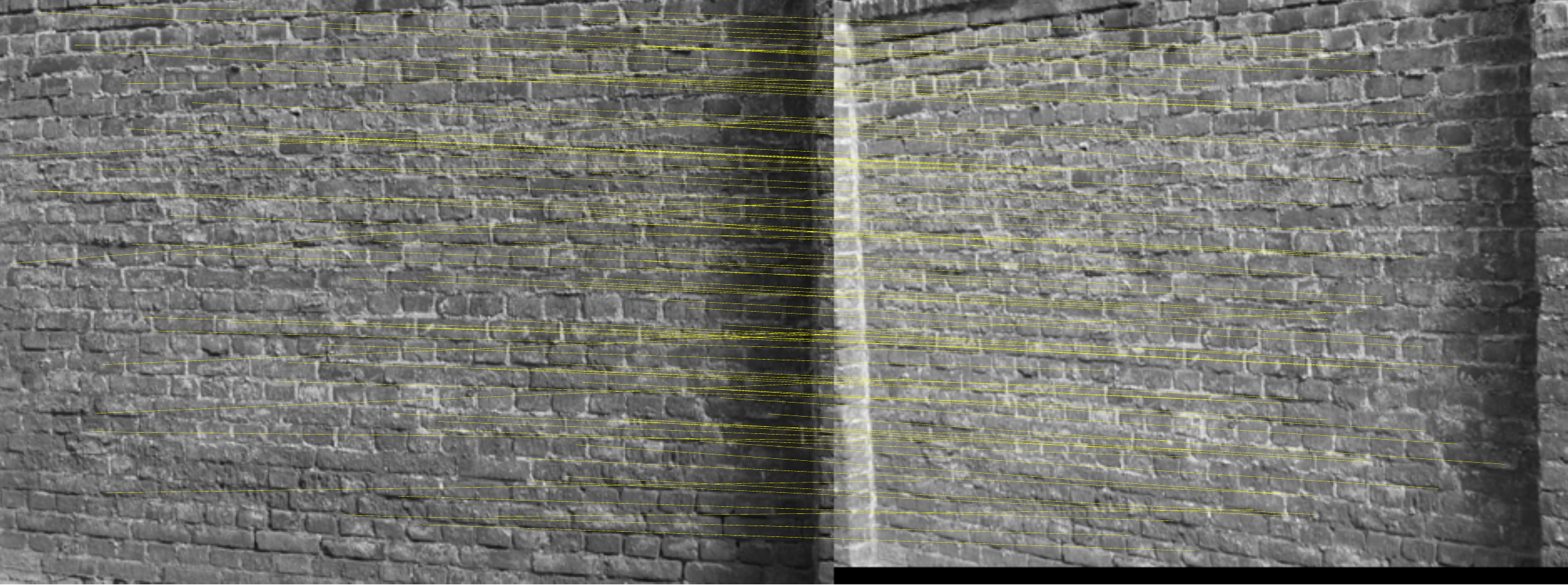}}
\subfigure[$\lambda=10$]{\includegraphics[height=3.5cm,width=9cm]{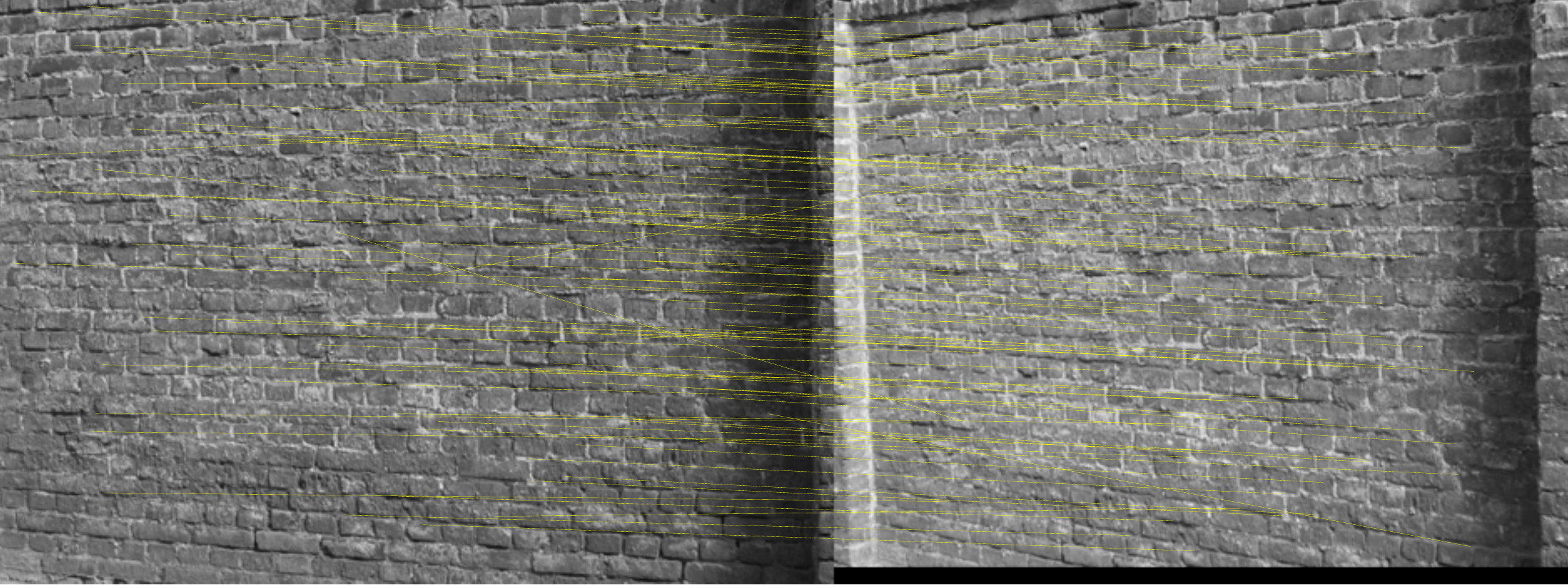}}
\caption{Performance for different values of $\lambda$. 10\% of the matching are displayed.}\label{parameter2}
\end{figure}
The performance of FastPFP is not very sensitive to the step size parameter $\alpha$, for $0 < \alpha < 1$. In Fig. \ref{parameter1}, the runtime and matching error of FastPFP in the image sequence (image 0 vs. image 110) and the real image experiment (Graffiti) are shown for different values of $\alpha$, ranging from 0.1 to 1. The performance of FastPFP is unstable for $\alpha=1$.

For attributed graph matching, the matching quality is not very sensitive to the parameter $\lambda$. We show the matching result for $\lambda=\infty$ (which means the matching objective function becomes $\| B -XB' \|^2_F$ and is solved by Hungarian method), $\lambda=0$ and $\lambda=10$. The results are similar except for $\lambda=\infty$. The matching results are shown in Fig. \ref{parameter2}.

\subsection{Limitations}
Despite the encouraging experimental results, our algorithm is derived heuristically, as many other approximate graph matching algorithms such as GA (FastGA), Umeyama's method, SM and RWRW. For these algorithms including ours, the optimality of the solutions is not theoretically guaranteed. Also, our algorithm does not use an edge compatibility matrix, due to its high computational cost. Consequently, another limitation of our algorithm is its inability to handle arbitrary edge relationship (some graphs have vector edge attributes), unlike those algorithms based an edge compatibility matrix.

\section{Conclusion}\label{sec:conc}
We proposed a new fast graph matching algorithm based on a new
projected fixed-point method, suitable for large graph matching. Extensive experiments were
conducted to demonstrate the strength of our algorithm over previous
state-of-the-art algorithms. Future work includes theoretical analysis, extension and
applications of our algorithm.

\section*{Acknowledgements}

\bibliographystyle{splncs}
\bibliography{egbib}

\section*{Appendix}
\subsection*{FastGA}
In general, GA also has time complexity $O(n^4)$ per iteration.
However, when the compatibility matrix $C=A\otimes A'$ (which was mentioned in the original paper of GA~\cite{GA}) is used, GA
has only time complexity $O(n^3)$ per iteration. To see this,
consider the most time-consuming step of GA, which has time
complexity $O(n^4)$
\begin{equation}\label{GAO4}
\forall \  i, \forall \ a, \quad X_{ia}^{(t+1)}=\exp(\beta
Q_{ia}^{(t)}), \quad
Q_{ia}^{(t)}=\sum_{b=1}^{n}\sum_{j=1}^{n}X_{bj}^{(t)}C_{aibj},
\end{equation}
where $X$ is the soft-assignment matrix. If
$C_{aibj}=A_{ij}A'_{ab}$, then (\ref{GAO4}) can be compactly written
as
\begin{equation}\label{GAO3}
X^{(t+1)}=\exp(\beta AX^{(t)}A'),
\end{equation}
which has time complexity $O(n^3)$.

\subsection*{Problem Formulation}
By expanding the Fronbenius norm, we have
\begin{equation}
\frac{1}{2}\| A-XA'X^T \|^2_F + \lambda\| B - XB' \|^2_F
\end{equation}
\begin{equation}
=\frac{1}{2}tr(AA^T) +\frac{1}{2}tr(X^TXA'X^TXA'^{T}) \label{trace1} - tr(X^{T}AXA')
\end{equation}
\begin{equation}
+\lambda tr(BB^T) + \lambda tr(X^TXB'B'^T)  - 2\lambda tr(X^TBB') \label{trace2},
\end{equation}
due to the invariance of the matrix trace under cyclic permutation. Since $X$ is a partial permutation matrix and $n \geq n'$, $X^TX = I$. And the first two terms of (\ref{trace1}) and (\ref{trace2}) are constants. Therefore, the minimization problem
\begin{equation}
\min_X \frac{1}{2}\| A- XA'X^T \|^2_F + \lambda\| B - XB' \|^2_F,
\end{equation}
\begin{equation}
s.t. \quad X \mathbf{1}\leq\mathbf{1}, X^T\mathbf{1}=\mathbf{1}, X
\in \{0,1\}^{n\times n'},
\end{equation}
is equivalent to
\begin{equation}\
\max_{X} \  \frac{1}{2}tr(X^{T}AXA') + \lambda tr(X^TK),
\end{equation}
\begin{equation}
s.t. \quad X\mathbf{1}\leq\mathbf{1}, X^T\mathbf{1}=\mathbf{1}, X
\in \{0,1\}^{n\times n'}.
\end{equation}

\subsection*{Successive Projection}
The derivation of (\ref{P1})(\ref{P2}) is due to \cite{DSN}, with
slight modification in here. We first consider $P_1$. The Lagrangian
of (\ref{PP1}) is
\begin{align}
L(D,u_1,u_2)&=tr(D^TD-2X^TD)  \\
&- u_1^T(D\textbf{1}-\textbf{1}) -
u_2^T(D^T\textbf{1}-\textbf{1}).
\end{align}
Let $u_1=u_2=u$ and set the derivative with respect to $D$ to zero,
we have
\begin{equation}
D=X+u\textbf{1}^T+\textbf{1}u^T.
\end{equation}
Multiplies by \textbf{1} on both sides:
$u=(nI+\textbf{1}\textbf{1}^T)^{-1}(I-X)\textbf{1}$. Combining the
fact that
$(nI+\textbf{1}\textbf{1}^T)^{-1}=(1/n)(I-(1/2n)\textbf{1}\textbf{1}^T)$,
we obtain (\ref{P1}). The derivation of (\ref{P2}) is
straightforward and omitted.

\begin{IEEEbiography}{Yao Lu}
\end{IEEEbiography}

\begin{IEEEbiography}{Kaizhu Huang}
\end{IEEEbiography}


\begin{IEEEbiography}{Cheng-Lin Liu}
\end{IEEEbiography}

\end{document}